\pgfplotsset{compat=newest}
\definecolor{TUDa-1b}{HTML}{005AA9}
\definecolor{TUDa-3b}{HTML}{009D81}
\definecolor{TUDa-7b}{HTML}{F5A300}
\definecolor{TUDa-9b}{HTML}{E6001A}
\definecolor{TUDa-11b}{HTML}{721085}
\newtheorem{theorem}{\textbf{Theorem}}
\newenvironment{proof}[1][$\!\!$]{{\it Proof  #1:  }}
{\hfill$\blacksquare$}
\tikzset{cross/.style={cross out, draw=black, minimum size=5pt, inner sep=0pt, outer sep=0pt},
%default radius will be 1pt. 
cross/.default={1pt}}
\newcommand{\polish}[2]{\textcolor{blue}{#2}}
\begin{document}

\setlength{\textfloatsep}{10pt plus 1pt minus 2pt}

\title{\vspace{-0.2cm} Attentional Graph Neural Network Is All You Need for Robust Massive Network Localization}

\author{\vspace{-0cm}
Wenzhong~Yan,~\IEEEmembership{Student Member,~IEEE},~Feng~Yin,~\IEEEmembership{Senior Member,~IEEE},\\~Juntao~Wang,~Geert~Leus,~\IEEEmembership{Fellow,~IEEE},~Abdelhak~M.~Zoubir,~\IEEEmembership{Fellow,~IEEE},~and~Yang~Tian% <-this % stops a space

\thanks{
This paper is an extension of our work \cite{yan2021graph}, presented in the proceedings of ICASSP 2021. (Corresponding author: Feng Yin.)}
\thanks{Wenzhong~Yan,~Feng~Yin,~and~Juntao~Wang are with the School of Science \& Engineering, The Chinese University of Hong Kong, Shenzhen, Shenzhen 518172, China (email: wenzhongyan@link.cuhk.edu.cn, yinfeng@cuhk.edu.cn, juntaowang@link.cuhk.edu.cn).}
\thanks{Geert~Leus is with the Faculty of Electrical Engineering, Mathematics and Computer Science, Delft University of Technology, 2826 CD Delft, The Netherlands (e-mail: g.t.l.leus@tudelf.nl).}
\thanks{Abdelhak~M.~Zoubir is with the Signal Processing Group, Technische Universität
Darmstadt, Darmstadt 64283, Germany (e-mail: zoubir@spg.tu-darmstadt.de). 
}
\thanks{Yang~Tian is with Huawei Technologies Co., LTD., Shanghai 201206, China (email: yang.tian@huawei.com).}
}

\maketitle

\begin{abstract}
In this paper, we design Graph Neural Networks (GNNs) with attention mechanisms to tackle an important yet challenging nonlinear regression problem: massive network localization. We first review our previous network localization method based on Graph Convolutional Network (GCN), which can exhibit state-of-the-art localization accuracy, even under severe Non-Line-of-Sight (NLOS) conditions, by carefully preselecting a constant threshold for determining adjacency. As an extension, we propose a specially designed Attentional GNN (AGNN) model to resolve the sensitive thresholding issue of the GCN-based method and enhance the underlying model capacity. The AGNN comprises an Adjacency Learning Module (ALM) and Multiple Graph Attention Layers (MGAL), employing distinct attention architectures to systematically address the demerits of the GCN-based method, rendering it more practical for real-world applications. Comprehensive analyses are conducted to explain the superior performance of these methods, including a theoretical analysis of the AGNN's dynamic attention property and computational complexity, along with a systematic discussion of their robust characteristic against NLOS measurements. Extensive experimental results demonstrate the effectiveness of the GCN-based and AGNN-based network localization methods. Notably, integrating attention mechanisms into the AGNN yields substantial improvements in localization accuracy, approaching the fundamental lower bound and showing approximately 37\% to 53\% reduction in localization error compared to the vanilla GCN-based method across various NLOS noise configurations. Both methods outperform all competing approaches by far in terms of localization accuracy, robustness, and computational time, especially for considerably large network sizes.
\end{abstract}

\begin{IEEEkeywords}
Graph neural networks, attention mechanism, massive network localization, non-line-of-sight.
\end{IEEEkeywords}

\IEEEpeerreviewmaketitle

%----------------------------------------------
%   Introduction
%----------------------------------------------
% \newpage

\section{Introduction}
%----------------------------------------------

\IEEEPARstart{G}{raph} Neural Networks (GNNs) have recently achieved excellent results in a variety of graph-related learning tasks, such as node classification, link prediction, and graph classification \cite{kipf2016semi, hamilton2017inductive, velivckovic2017graph, xu2018powerful,isufi2021edgenets}. By leveraging graph topology to aggregate neighbor information and enhance node representations \cite{ortega2018graph, gama2018convolutional}, GNNs offer a unique advantage in processing complex, interconnected datasets. Despite their effectiveness, GNNs have predominantly focused on classification tasks with discrete labels. However, nonlinear regression problems, which represent a substantial portion of practical applications in signal processing, remain less investigated.

In this paper, we delve into a classic yet challenging nonlinear regression problem: massive network localization. Over recent decades, a variety of canonical methods have been developed to tackle this issue, ranging from maximum likelihood \cite{Patwari03, yin2013and, Simonetto_Leus_2014, yin2013toa} and least-squares based \cite{Wymeersch2009} estimation methods to multi-dimensional scaling \cite{Costa2006}, mathematical programming \cite{Biswas_Ye_SDP, Tseng2007, xiong2021cooperative}, and Bayesian message passing approaches \cite{Ihler2005,jin2020bayesian}. Despite their diversity, a major challenge these methods face is the impact of Non-Line-of-Sight (NLOS) propagation, which can incur severe performance degradation of the localization accuracy due to severely biased distance estimation.

The prevalent strategy to combat NLOS effects is to perform NLOS identification for each link, and either discard or suppress the NLOS measurements \cite{Win_2010_NLOS_UWB}. However, this method requires large-scale offline calibration and substantial manpower. Alternatively, the NLOS effects can be dealt with from an algorithmic aspect. Based on the assumption that NLOS noise follows a certain probability distribution, maximum likelihood estimation based methods handling NLOS effects were developed in \cite{Yin_ECM, Chen_Wang_So_Poor_2012}. Yet, such methods may suffer from performance deterioration due to model mismatch. Moreover, network localization has been formulated as a regularized optimization problem in which the NLOS-inducing sparsity of the ranging-bias parameters was exploited \cite{jin2021exploiting}. Unfortunately, all the above methods are computationally expensive for massive networks, which makes them less feasible for large-scale applications.

Most recently, we have introduced a novel Graph Convolutional Network (GCN)-based method for network localization in \cite{yan2021graph}, which significantly advances the field of massive network localization by offering excellent accuracy, robustness, and efficiency without requiring extensive offline calibration or NLOS identification. However, some challenges persist in the vanilla GCN-based method, particularly concerning its reliance on empirically predefined thresholds for establishing adjacency and its insufficient model expressiveness. 
		
{Recent studies have explored advanced GNN-based techniques to improve wireless localization.
For instance, \cite{han2024mobility} proposed a graph-learning approach that incorporates temporal and directional mobility patterns into a time-varying graph structure, complemented by a mobility regularization term to improve performance.  \cite{tang2023csi} developed a GNN-based localization method that utilizes Channel State Information (CSI) by modeling amplitude-phase relationships as graph inputs, enabling efficient position estimation via a GraphSAGE architecture \cite{hamilton2017inductive}. These works illustrate the potential of domain-adapted graph-based methods to address challenges in wireless localization.}

The advent of attention mechanisms has recently significantly impacted the modeling of structured data, including spatial and temporal data \cite{bahdanau14neural,vaswani2017attention,velivckovic2017graph}. 
These mechanisms, by design, enable models to focus on the most relevant parts of the input data, enhancing their ability to learn complex patterns from the data.  {For example, \cite{wu2022multi} proposed a federated graph learning framework integrating GNNs with self-attention mechanisms for personalized Wi-Fi localization, effectively capturing spatial and client-specific dependencies. These advancements highlight} {the benefits of combining attention mechanisms with GNNs to further enhance network localization performance under diverse conditions.}

Motivated by these factors, in this paper, we propose an Attentional GNN (AGNN) model, comprising two crucial modules: Adjacency Learning Module (ALM) and Multiple Graph Attention Layers (MGAL), aimed at effectively mitigating the limitations inherent in the GCN-based network localization. Besides, we systematically analyze the superb properties of the proposed model. 
The main contributions can be summarized as follows.
\begin{itemize}[itemsep=0pt, topsep=2pt]
    \item To the best of our knowledge, this is the first work to integrate GNNs and attention mechanisms for network localization. Together with our previous GCN-based method \cite{yan2021graph}, we hope to pave a new path to solve massive network localization in our futuristic connected world.

    \item To remove the reliance on a predefined constant threshold for determining adjacency in the vanilla GCN-based method, we introduce an ALM that uses a tailored attention architecture to autonomously learn edge-specific thresholds. This module can characterize more reasonable and flexible node neighborhoods, particularly effective in filtering out NLOS-affected node pairs.

    \item To enhance the model expressiveness, we introduce MGAL, which leverage attention mechanisms to learn aggregation weights based on the features of each target node and its neighbors, instead of using constant weights in the GCN-based method. Moreover, differences in the distributions of the learned attention scores for Line-of-Sight (LOS) and NLOS links evidently show that our adopted attention mechanism is able to identify NLOS links, addressing a long-standing challenge in the field.

    \item To support the significant performance enhancement of the AGNN model, we perform a comprehensive analysis with the highlights on the dynamic attentional properties of our specially designed attention architectures, and a thorough complexity analysis for all competing models.

    \item To showcase the efficacy of the AGNN model, we conduct extensive experiments which revealed that AGNN most closely approaches the Cramér-Rao Bound (CRB) and exhibits the strongest robustness against NLOS noise among all tested methods. It significantly reduces localization error by approximately 37\% to 53\% compared to the GCN-based method under diverse noise conditions, while maintaining comparable computation times.
\end{itemize}

The remainder of this paper is organized as follows. The background of network localization and its problem formulation are introduced in Sec.~\ref{sec: problem formulation}. In Sec.~\ref{sec: static_GCN}, we briefly review the GCN framework for network localization and show its superb localization performance. 
For more general and practical scenarios, the AGNN model is further proposed in Sec.~\ref{sec: dynamic_netloc_GAT}. 
In Sec.~\ref{sec: perfomance_analysis}, we conduct theoretical analyses of AGNN, followed by the numerical results in Sec.~\ref{sec: numerical_results}.
Finally, we conclude the paper in Sec.~\ref{sec: conclusion}. Notations used in this paper are summarized in Tab.~\ref{tab:notations}.

\begin{table}[t]
\caption{Summary of notations.}
\label{tab:notations}
\centering
% \footnotesize
% \resizebox{\columnwidth}{!}{%
\begin{tabular}{c|c}
\hline
Notation                           & Definition                                 \\ \hline
$\mathcal{S}$, $|\mathcal{S}|$     & set, cardinality of set                    \\
$\bbx$, $\bbX$, $(\cdot)^\top$     & vectors, matrices, vector/matrix transpose \\
$\bbx_i$, $\bbx_{[i,:]}$, $x_{ij}$ & column, row, element of matrix $\bbX$      \\
$[N]$                              & set of natural numbers $1, \ldots, N$      \\
$[\cdot\|\cdot]$                   & concatenation operation for two vectors    \\
 $\lvert\bbx\rvert$                & element-wise absolute value of vector $\mathbf{x}$ \\
$\|\cdot\|$, $\|\cdot\|_F$         & Euclidean norm, Frobenius norm             \\ \hline
\end{tabular}%
% }
\end{table}

%% Notation
%\emph{Notation}: 

%----------------------------------------------
%   Problem formulation
%----------------------------------------------
\section{Background and Problem Formulation}
\label{sec: problem formulation}
%----------------------------------------------

\subsection{Localization Scenario}

Fig.~\ref{fig:localization_scenario} illustrates a classic wireless network localization scenario. In the context of a massive wireless network, only a limited number of nodes, known as anchors, possess precise location information from satellite navigation systems.
Conversely, the remaining nodes, referred to as agents, lack satellite navigation access and consequently need to be located. Equipped with omni-directional antennas, each node broadcasts signals containing information such as node ID, transmit power, transmit time, and network configurations.

This scenario is highly representative of many real-world massive wireless networks, such as 5G networks, IoT networks, vehicle networks, and metaverse \cite{yin2020fedloc, ning2023survey}. 

\begin{figure}[t] 
	\centering
	\includegraphics[width=0.9\linewidth]{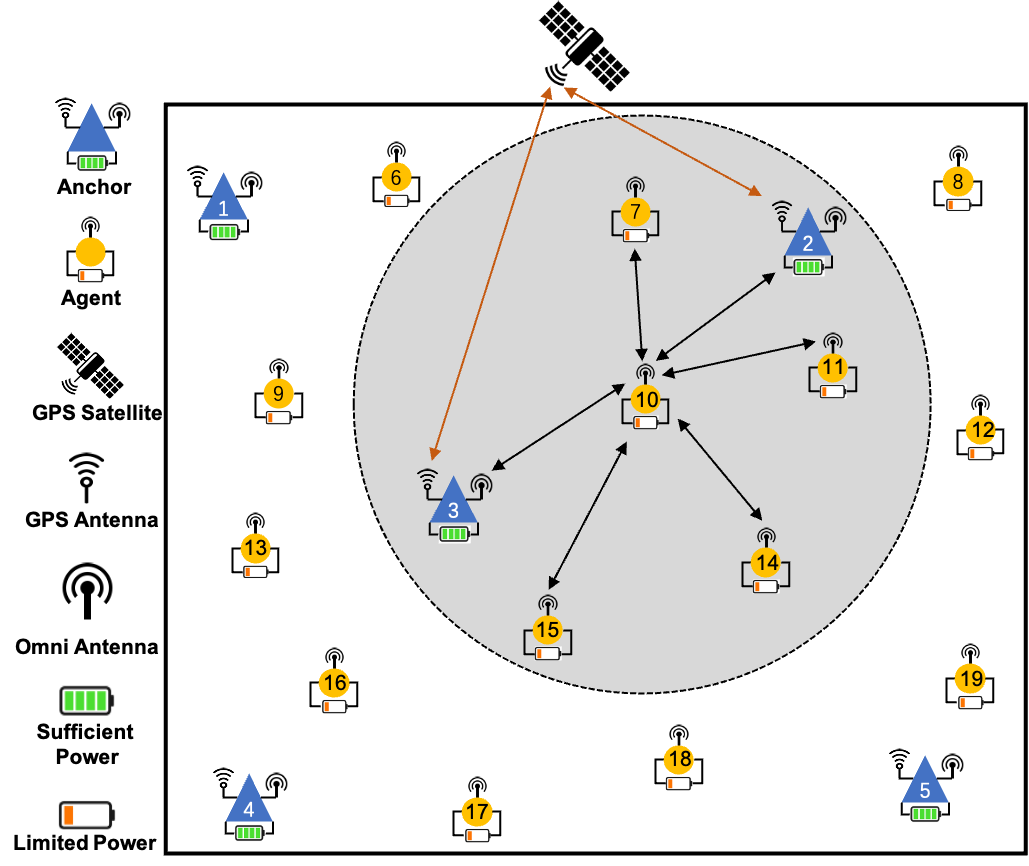} 
	\caption{Network localization scenario. The shaded area represents the region of interest for the $10$-th node after threshold selection which will be introduced in Sec.~\ref{sec: static_GCN_reg}.}
	\label{fig:localization_scenario}
\end{figure}

\subsection{Problem Formulation}
We restrict our study to a wireless network in a two-dimensional (2-D) space, as the extension to the 3-D case is straightforward. We let $\mathcal{S}_l = \{1,2,\dots, N_l\}$ be the set of indices for the anchors (labeled nodes), whose positions $\mathbf{p}_i\in\mathbb{R}^2, \forall i\in \mathcal{S}_l$ are known and fixed, and $\mathcal{S}_u = \{N_l+1, N_l+2,\ldots, N\}$ be the set of indices for the agents (unlabeled nodes), whose positions are unknown. 

In accordance with numerous established studies, e.g., \cite{Chen_Wang_So_Poor_2012}, the measured distance between any two nodes $i$ and $j$ can be represented as follows:
\begin{equation}
	\label{dist_vs_pos}
	x_{ij} = d_{ij} + n_{ij},
\end{equation}
where $d_{ij}:=\|\mathbf{p}_i-\mathbf{p}_j \|$ is defined as the Euclidean distance between nodes $(i,j)$. The term $n_{ij}$ accounts for an additive measurement error arising from line-of-sight (LOS) and non-line-of-sight (NLOS) propagation,  which can be further detailed as follows:
\begin{equation}
\label{Eq:noise_generated}
	n_{ij} = n^L_{ij} + b_{ij}n^N_{ij}.
\end{equation}
Herein, the LOS noise, $n^L_{ij}$, follows a zero-mean Gaussian distribution, i.e., $n^L_{ij}\sim\mathcal{N}(0, \sigma^2)$, while the NLOS term, $n_{ij}^N$, is generated from a positive biased distribution; $b_{ij}$ is sampled from the Bernoulli distribution $\mathcal{B}(p_B)$ with $p_B$ being the NLOS occurrence probability \cite{qi2006analysis}. Consequently, a measurement matrix, denoted by $\mathbf{X}\in\mathbb{R}^{N \times N}$, can be constructed by stacking the measured distances, i.e., the $(i,j)$-th entry of $\mathbf{X}$, $x_{ij}$. Notably, we consider a symmetric scenario where $x_{ij} = x_{ji}$ and $x_{ii} = 0$ for $i = 1 , 2,\ldots, N$.

The measured distances can be estimated from the Received Signal Strength (RSS), or the Time of Arrival (ToA) measurements, which can be acquired through diverse technologies, including cellular networks, wireless local area networks, ultra-wideband radio frequencies, as well as ultra and audible sound. In all cases, NLOS propagation can introduce either significant time delays in ToA measurements or substantial signal attenuation in RSS measurements \cite{zekavat2011handbook}, thereby impeding the accuracy of the eventual distance estimation.

Building on the aforementioned formulations, our primary objective is to precisely localize the agents within a massive wireless network while ensuring satisfactory computation time. To achieve this, we propose a novel GCN-based network localization method in the subsequent section.

\section{Network Localization with Classic GCN}
\label{sec: static_GCN}

In this section, we first introduce a GCN-based data-driven method for network localization that was originally proposed in \cite{yan2021graph}. Subsequently, we provide some key findings regarding the reasons behind the exceptional performance of this model. Lastly, we discuss some of its limitations that necessitate further improvements.

\subsection{Graph Convolutional Network \cite{yan2021graph}}
\label{sec: static_GCN_reg}

In this subsection, our focus lies on the formulation of the network localization problem utilizing classical GCNs.
We formally define an undirected graph associated with a wireless network as ${\mathcal{G}} = ({\mathcal{V}}, \mathbf{A})$, where $\mathcal{V}$ denotes the set of nodes $\{v_1,v_2, \ldots, v_N\}$ in the network, including anchors and agents, and $\mathbf{A}\in\mathbb{R}^{N \times N}$ is a symmetric binary adjacency matrix. 
In classical GCNs, each element $a_{ij}$ of $\mathbf{A}$ represents the existence of the edge connecting nodes $v_i$ and $v_j$, which can be inherently interpreted as a measure of similarity between nodes $i$ and $j$. While, in the context of network localization, the concept of similarity is particularly relevant to proximity, essentially indicated by the Euclidean distance $d_{ij}$ between nodes. To integrate this concept into the GCN-based method, we introduce an Euclidean distance threshold, denoted as $T_h$, to determine the existence of an edge between any two nodes, i.e., whether two nodes are proximal. Therefore, by applying this threshold, the adjacency matrix $\mathbf{A}$ is constructed as follows:
\begin{equation}\label{eq:threshold}
	a_{ij}=\begin{cases}
		0,& \mathrm{if} \quad  x_{ij}>T_h,\\
		1,& \mathrm{otherwise}.
	\end{cases}
\end{equation}
As further elaborated in Sec.~\ref{sec: static_GCN_analysis}, this threshold plays a crucial role in localization performance. And we also need to note that $\mathbf{A}$ includes self-connections, making it an augmented adjacency matrix. Based on aforesaid definitions, we can construct a truncated measurement matrix $\hat{\mathbf{X}}=\mathbf{A} \odot \mathbf{X}$ which only contains those measured distances that are smaller than or equal to $T_h$.
\footnote{In $\hbX$, a zero means a null value, representing distances between corresponding nodes whose values exceed $T_h$.}

With the well-defined graph, we propose our GCN-based method as follows. We first denote the node representation at the $k$-th layer as $\mathbf{H}^{(k)}\in\mathbb{R}^{N \times D_k}$, where $D_k$ is the hidden dimension of the $k$-th layer. And the initial node representation is set to be the truncated measurement matrix, i.e., $\mathbf{H}^{(0)}=\hat{\mathbf{X}}$. In each layer, a \textit{feature propagation} step is implemented first by the following update rule:
\begin{equation} 
	\label{eq:update_matrix}
	\bar{\mathbf{H}}^{(k)}\in\mathbb{R}^{N \times D_{{k}}} = \hat{\mathbf{A}} \mathbf{H}^{({k})},
\end{equation}
where $\hat{\mathbf{A}}\in\mathbb{R}^{N \times N} :={\mathbf{D}}^{-\frac{1}{2}} {\mathbf{A}} {\mathbf{D}}^{-\frac{1}{2}}$ signifies the normalized adjacency matrix \cite{kipf2016semi}, ${\mathbf{D}}\in\mathbb{R}^{N \times N}:= \mathrm{diag}(d_1,d_2, \ldots, d_N)$ is the associated degree matrix of $\mathbf{A}$ with $d_i = \sum_{j = 1}^N a_{ij}$, and $\bar{\mathbf{H}}^{(k)}$ represents the {propagated} representation matrix in the $k$-th graph convolution layer. Conceptually, this step effectively smoothens the {node} representations within the local graph neighborhood, thereby promoting similar predictions for closely connected nodes. Following the feature propagation step, the subsequent two stages of our GCN, namely \textit{linear transformation} and \textit{nonlinear activation}, are implemented as:
\begin{equation} 
	\label{eq:gcn_propagation}
	\mathbf{H}^{({k+1})} \in\mathbb{R}^{N \times D_{k}} =  \varphi \left( \bar{\mathbf{H}}^{(k)} \mathbf{W}^{(k)}\right),
\end{equation}
where $\mathbf{W}^{(k)}\in\mathbb{R}^{D_{{k}} \times D_{k+1}} $ is the trainable weight matrix in the $k$-th layer, and $\varphi(\cdot)$ is an element-wise nonlinear activation function such as $\mathrm{ReLU}(\cdot)= \max(0,\cdot)$ \cite{ramachandran2017searching}.

Taking the output of the GCN as the estimated positions of all nodes, denoted by $\hat{\mathbf{P}}=[\hat{\mathbf{p}}_1,\hat{\mathbf{p}}_2,\dots,\hat{\mathbf{p}}_{N_l}, \hat{\mathbf{p}}_{N_l+1}, \dots \hat{\mathbf{p}}_N]^{\top}\in\mathbb{R}^{N\times 2}$, we can optimize the GCN by the following optimization problem:
\setlength{\jot}{1pt}
\begin{equation}
\label{Eq: opt_problem_GCN}
	\begin{aligned}
		\min_{\bbW} \quad  \mathcal{L} :=& \|\mathbf{P}_l-\hat{\mathbf{P}}_l \|^2_F\\[2\jot]
		\mathrm{s.t.} \quad  \hat{\bbP} =& \mathrm{GCN}_{\bbW}(\bbA,\hat{\bbX}).
	\end{aligned}
\end{equation}
The objective function is the Mean Squared Error (MSE) between the true positions of anchors $\mathbf{P}_l = [\mathbf{p}_1,\mathbf{p}_2,\dots,\mathbf{p}_{N_l}]^{\top}$ and their respective estimations $\hat{\mathbf{P}}_l =  [\hat{\mathbf{p}}_1,\hat{\mathbf{p}}_2,\dots, \hat{\mathbf{p}}_{N_l}]^{\top}$. Here, $\bbW$ denotes all trainable matrices in the GCN model which can be optimized by gradient descent techniques, such as Stochastic Gradient Descent (SGD) \cite{bottou2010large} or Adam \cite{kingma2014adam}.

\subsection{Key Findings of GCN} \label{sec: static_GCN_analysis}

The numerical results, as detailed in Sec.~\ref{sec: numerical_results}, illustrate significant enhancements achieved by the GCN-based method compared to traditional benchmark methods. In the subsequent discussion, we identify key factors underpinning this method's outstanding performance, offering insights into its effectiveness for network localization and guiding future improvements. We pinpoint two major factors: 1) the threshold $T_h$ and 2) the normalized adjacency matrix $\hat{\mathbf{A}}$.

\vspace{6pt}
\subsubsection{Effects of Threshold $T_h$}

An appropriate threshold serves two primary functions: a) truncating significant noise and b) preventing over-smoothing.

\vspace{4pt}
\noindent
\textit{- Noise Truncation.}
Thresholding ensures that $\mathbf{A}$, the adjacency matrix, includes only edges $a_{ij}$ where $n_{ij} \leq T_h - d_{ij}$, implying that $\hat{\mathbf{X}} = \mathbf{A} \odot \mathbf{X}$ retains distance measurements $x_{ij}$ under the threshold $T_h$ and accompanied by confined noise. For those measurements with the existence of NLOS noise, this condition can be further expressed as $ n_{ij}^{{N}} \leq T_h - d_{ij} - n_{ij}^{L}$ with $n_{ij}^{L}\sim\mathcal{N}(0,\sigma^{2})$. By applying an appropriate threshold, we not only constrain the distance but also truncate the noise, such that measurements are retained only if two nodes are adjacent and primarily affected by a small or moderate measurement noise.

\vspace{4pt}
\noindent
\textit{- Avoiding Over-smoothing.}
In the extreme case with an excessively large threshold, rendering the graph fully connected with an all-ones adjacency matrix, all rows of the hidden representation matrix $\bar{\mathbf{H}}^{(k)}$ become indistinguishable. The loss of discriminative ability leads to the predicted positions of all nodes converging towards the same. This phenomenon is known as \textit{over-smoothing}. Thus, selecting an appropriate threshold is crucial to prevent such detrimental effects.
\vspace{4pt}

To assess the threshold $T_h$'s impact on localization, we experimented across varying $T_h$ values within three noise settings, illustrated in Fig.~\ref{fig:threshold}. The results delineate five performance regions corresponding to different threshold ranges. In Region I, initial performance issues stem from insufficient graph edges when $T_h$ is set too low, leading to isolated nodes that hinder effective localization. Within Regions II to IV, a counterbalance is observed between the NLOS noise truncation effect (Region II) and the growing number of neighbors per node (Region IV). Region V exhibits a sharp increase in RMSE due to diminished noise filtering and severe over-smoothing issues. For detailed experimental settings and result descriptions, refer to our previous work \cite{yan2021graph}.

\begin{figure}[t] 
\centering
\includegraphics[width=1\linewidth]{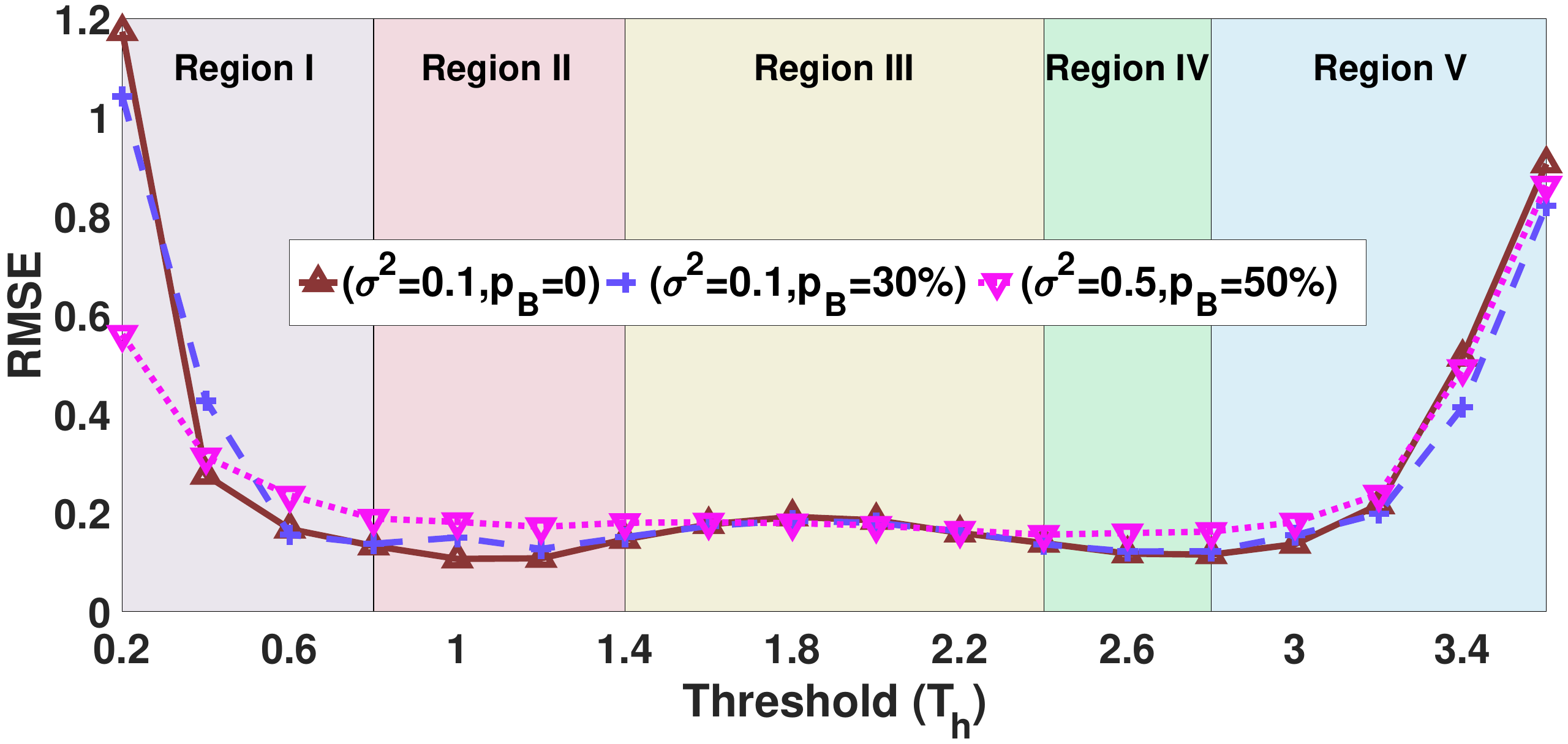}
\caption{The averaged loss (RMSE) v.s. threshold under different noise conditions in the GCN model.}
\label{fig:threshold}
\end{figure}

\begin{figure}[t] 
\centering
\includegraphics[width=1\linewidth]{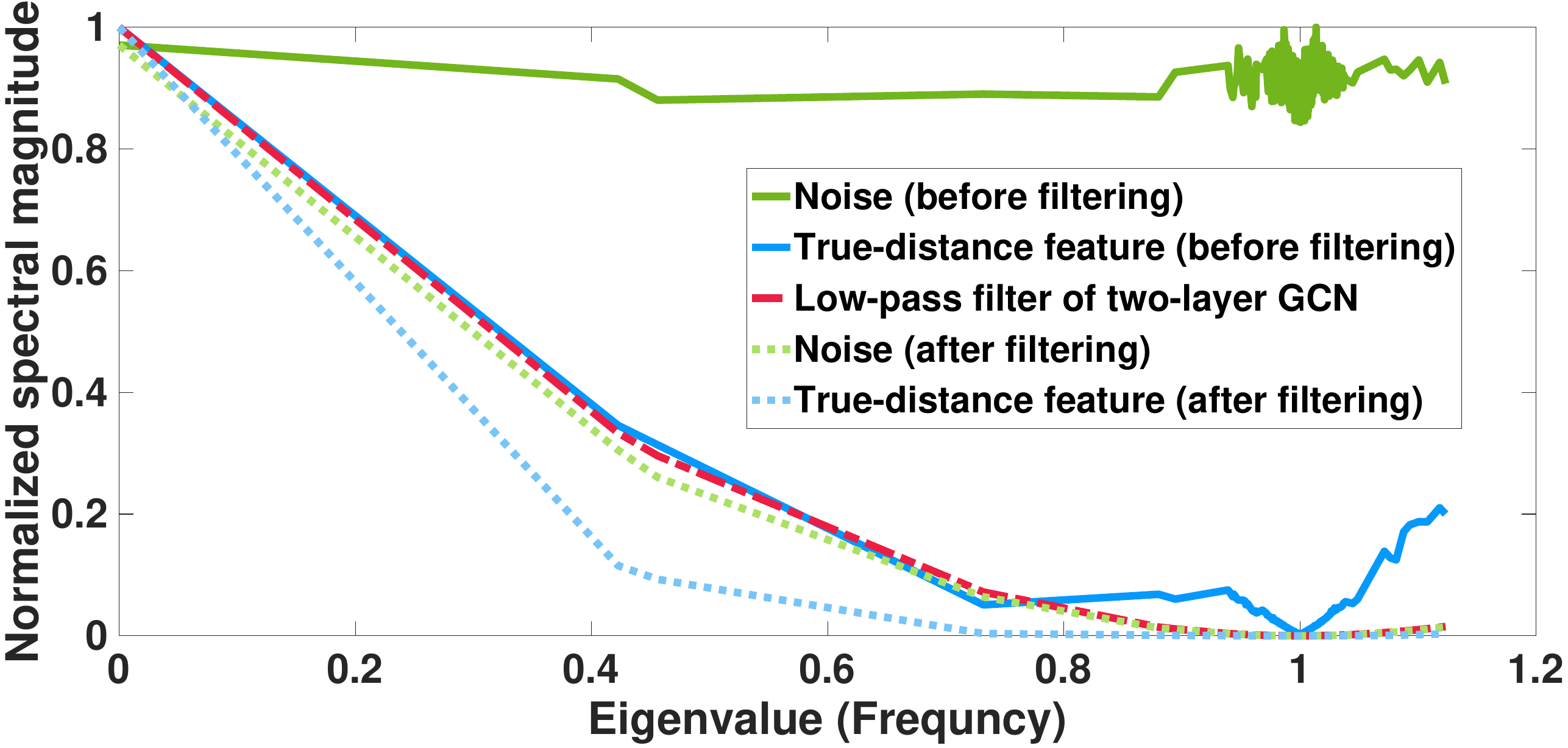}
\caption{Spectral components of the $10$-th column vector (similar for other columns) of the noise, $\bbn_{10}$, and the true distance, $\bbd_{10}$, in dataset $(\sigma^2=0.1,p_B=30\%)$.}
\label{fig:frequency_plot}
\end{figure}

\vspace{6pt}
\subsubsection{Effects of Normalized Adjacency Matrix $\hat{\mathbf{A}}$}

To comprehend the superior localization performance of GCN, we also analyze the effects of the normalized adjacency matrix, $\hat{\mathbf{A}}$, from both spatial and spectral perspectives.

\vspace{4pt}
\noindent
\textit{- Spatial Perspective: Aggregation and Combination.} 
To understand the spatial effect of $\hat{\mathbf{A}}$, we disassemble $\hat{\mathbf{A}}$, cf. Eq.~\eqref{eq:update_matrix}, into two components:
\begin{equation}
	\label{eq: GCN_aggregation}
	\bar{\mathbf{h}}_{[i,:]}^{(k)} = \underbrace{\frac{1}{d_i} \mathbf{h}_{[i,:]}^{({k})}}_{\mathrm{Intrinsic\;information [i]}}+\underbrace{\sum_{j\in\mathcal{N}_i} \frac{a_{ij}}{\sqrt{d_i  d_j}}\mathbf{h}_{[j,:]}^{({k})}}_{\mathrm{Aggregated\;information [ii]}}.
\end{equation}
This equation demonstrates the aggregation of neighboring features (Eq.\eqref{eq: GCN_aggregation}-[ii]) and their combination with a node’s own features (Eq.\eqref{eq: GCN_aggregation}-[i]). As a result, in GCN, the {propagated} representation of each labeled node (i.e., anchor in the localization setting) is a weighted sum of its neighbors. Essentially, GCN leverages the attributes of both labeled and unlabeled nodes in its training process.

\vspace{4pt}
\noindent
\textit{- Spectral Perspective: Low-pass Filtering.} 
From the spectral perspective, the feature propagation step in Eq.\eqref{eq: GCN_aggregation} can be rewritten as
\begin{subequations}
    \begin{align}
        \hat{\mathbf{A}} \mathbf{H}^{(k-1)} &= (\bbI-\hbL)\mathbf{H}^{(k-1)} \\
        &= (\bbI-\bbU\hat{\boldsymbol{\Lambda}}\bbU^\top)\mathbf{H}^{(k-1)} \\
        &= \bbU(\bbI-\hat{\boldsymbol{\Lambda}})\bbU^\top\mathbf{H}^{(k-1)},\label{Eq: graph_filtering}
    \end{align}
\end{subequations}
where $\hat{\mathbf{L}} := \mathbf{I}-\hat{\mathbf{A}}$ denotes the normalized Laplacian, $\bbU$ and $\hat{\boldsymbol{\Lambda}}$ are the associated eigenvectors and eigenvalues. In terms of graph signal operations, $\tbH^{(k-1)}:=\bbU^\top\mathbf{H}^{(k-1)}$ and $\mathbf{H}^{(k-1)}:= \bbU\tbH^{(k-1)}$ correspond to the Graph Fourier Transform and Inverse Graph Fourier Transform, respectively. The vector $\bbu_i$ and the scalar $\hat{\lambda}_i$ represent the $i$-th graph Fourier basis and ``frequency'' component of the graph signal, respectively \cite{sandryhaila2014discrete, gama2020graphs}. According to Eq.\eqref{Eq: graph_filtering}, each feature propagation operation acts as a spectral ``low-pass'' filter $g(\hat{\lambda}_i):=(1 - \hat{\lambda}_i)$.
Consequently, a two-layer GCN can be approximated as $\varphi(\hat{\mathbf{A}}^2 \bbX \bbW)$, which effectively applies the filter $(1 - \hat{\lambda}_i)^2$ \cite{wu2019simplifying}.
This spectral filtering concept is essential for understanding how GCNs process and filter signal information across the network. Fig.~\ref{fig:frequency_plot} illustrates the ``frequency'' components of the noise (including LOS noise and NLOS bias) and the true distance matrix before and after filtering. Notably, in the unfiltered state, the majority of information within the true distance matrix is concentrated in the ``low-frequency'' band. Conversely, the noise displays both ``low-frequency'' and ``high-frequency'' components before filtering. Comparing the results before and after filtering, it becomes evident that the normalized adjacency matrix $\hat{\mathbf{A}}$, serving as a ``low-pass'' filter, effectively mitigates the ``high-frequency'' component of the LOS noise. From a spectral analysis standpoint, this ``low-pass'' filtering capability of $\hat{\mathbf{A}}$ justifies the improved performance of GCNs in localization tasks by filtering out the ``high-frequency'' noise component.

\subsection{Limitations of Classic GCN}
\label{Sec:Limitation_GCN}
Despite the high suitability of the GCN-based method for network localization tasks demonstrated in the preceding discussion, the subsequent discussion reveals several key limitations of this method:

\vspace{4pt}
\noindent
\textit{- Optimal Threshold Challenge.} As shown in Fig.~2, the selection of threshold $T_h$ plays a pivotal role in determining the localization performance. {To achieve optimal results, $T_h$ should be chosen within Region II or Region IV. } However, as a hyperparameter, determining its optimal value for thresholding purpose is not straightforward{, especially when applying the model to new scenarios with varying sizes and shapes}. Furthermore, due to the distinct node positions and varying channel state information, each node may possess its unique optimal threshold, rather than a uniform threshold applicable to all nodes. Consequently, there arises a necessity to develop a more flexible model that is capable of autonomously learning the optimal threshold during the training process.

\vspace{4pt}
\noindent
\textit{- Limited model expressiveness.}
The GCN-based model employs a constant aggregation weight (see Eq.~\eqref{eq: GCN_aggregation}), determined by adjacency and node degree, which constrains its capacity to adaptively aggregate and combine neighborhood information. From the spectral perspective, the GCN-based model represents a fixed graph convolutional filter, as delineated by the green dash-dotted line in Fig.~\ref{fig:frequency_plot}. This fixed filtering characteristic (fixed adjacency matrix) compromises its versatility, necessitating the design of customized filters tailored to network noise characteristics at each graph convolutional layer.

\vspace{4pt}
In the following, we turn our focus to offering effective solutions to mitigate the aforementioned issues.

\section{Network Localization with AGNN}
\label{sec: dynamic_netloc_GAT}

In this section, we first review the attention mechanism used in recent works and its challenges in network localization. Then, we design a fresh Attentional GNN (AGNN) model that can autonomously determine the optimal threshold and adaptively determine the aggregation weight of each node for various network localization tasks.

\subsection{Attention Mechanism for Network Localization}

Motivated by recent advancements in attention-based graph neural networks, such as Graph ATention network (GAT) \cite{velivckovic2017graph} and refined version of GAT (GATv2) \cite{brody2022how}, we incorporate the attention mechanism into the network localization task, enabling the data-driven model to autonomously determine an optimal neighbor set for each node and acquire flexible aggregation weights for all pairs of nodes based on the so-called attention scores.

Graph attention architectures typically consist of a sequence of graph attention layers \cite{velivckovic2017graph}. To elucidate, we start by explaining the basic structure of a single graph attention layer, which can be represented as:
\begin{equation} 
	\label{eq: general_attention}
	e_{ij}^{(k)} = \mathrm{Att}\left(\mathbf{h}_{[i,:]}^{(k)}, \mathbf{h}_{[j,:]}^{(k)}\right).
\end{equation}
Herein, at the $k$-th layer, a shared attentional mechanism, denoted as $\mathrm{Att}(\cdot,\cdot): \mathbb{R}^{D_k} \times \mathbb{R}^{D_k} \rightarrow \mathbb{R}$, calculates an attention score $e_{ij}^{(k-1)}$ for each node pair $i$-$j$, reflecting the relative importance of node $j$'s features to those of node $i$. Essentially, Eq.~\eqref{eq: general_attention} enables each node to determine the importance of all its neighbors through attention scores.

However, directly applying this mechanism to network localization tasks encounters significant hurdles. Specifically, in such tasks when a predefined threshold $T_h$ is absent, i.e., the graph structure is not predetermined, it's common to assume a complete graph for deploying graph attention layers. As a result, the attention mechanism calculates scores for every node pair, leading to two main challenges:

\vspace{4pt}
\noindent
\textit{- High Computational Complexity.}
As the number of nodes increases, the number of possible node pairs grows quadratically. Therefore, computing attention scores for all possible node pairs is computationally expensive. This issue can significantly slow down the training of the attention mechanism, especially when dealing with the localization of a massive network.

\vspace{4pt}
\noindent
\textit{- Over-smoothing Issues.}
Similar to the analysis provided in Sec. \ref{sec: static_GCN_analysis}, in a complete graph, aggregating features from all nodes causes over-smoothing. Even with varied attention scores for edges, this mechanism doesn't effectively mitigate over-smoothing, especially reducing the model's expressiveness exponentially as model depth increases \cite{wu2023demystifying}.
\vspace{4pt}

To tackle these challenges, we introduce a novel AGNN model tailored for network localization tasks in the subsequent sections. This model is explicitly crafted to autonomously acquire graph structures during the training process, making it highly suitable for network localization tasks across diverse network scenarios.

{In addition to our proposed node-level attention-based approach, several other strategies have been explored to mitigate the over-smoothing problem. One such approach is the use of neighbor sampling \cite{ying2018graph, chen2018fastgcn}, which reduces the receptive field of each node, thereby preventing the model from overly depending on any single node and maintaining feature diversity across nodes. Another promising method is the incorporation of residual connections, which has been demonstrated to alleviate over-smoothing by preserving information flow across layers of the model \cite{chen2020simple}. While these methods have been proven effective in various contexts, we argue that the AGNN model introduced here, with its ability to learn graph structures dynamically during training, provides a tailored solution for network localization, where the graph structure is not predefined.}

\subsection{Overview of AGNN Model}
\label{sec: dynamic_GAT_reg}

\begin{figure*}[t] 
	\centering
	\includegraphics[width=1\linewidth]{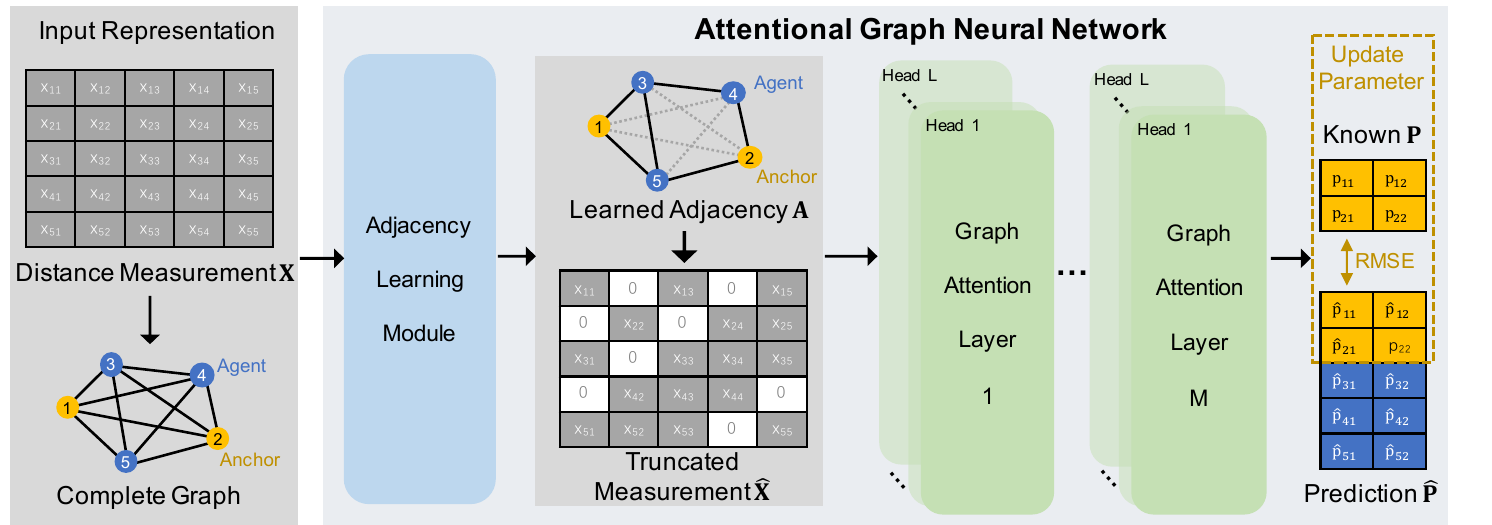}
	\caption{Framework of AGNN.}
	\label{fig: AGNN_framework}
    \hrulefill
\end{figure*}

We propose the AGNN model tailored for network localization tasks, which is composed of an Adjacency Learning Module (ALM) and Multiple Graph Attention Layers (MGAL), as illustrated in Fig.~\ref{fig: AGNN_framework}.
The ALM leverages an attention mechanism to learn a distance-aware threshold. In other words, nodes can adaptively adjust the threshold based on the distance to their neighbors, enabling more flexible neighbor selection. Subsequently, the MGAL utilize another attention-based mechanism to acquire adaptable aggregation weights based on the adjacency matrix learned by ALM.

The ALM autonomously determines the neighbor set for each node based on the attention mechanism.
This approach is able to learn an optimal adjacency matrix selectively, thereby significantly reducing computational complexity and effectively addressing the over-smoothing issue associated with the direct utilization of the graph attention mechanism in a complete graph. Next, we delve into the details of the ALM.

\subsection{Adjacency Learning Module (ALM)}
\label{sec: attentional_neighbor_selection}

To adaptively learn the optimal threshold during the training process, we propose an ALM, which incorporates an attention mechanism to learn a specific threshold $T^{A}_{ij}$ for each node pair. Nevertheless, learning $T^{A}_{ij}$ directly for each pair without prior graph structure knowledge still poses computational challenges. To mitigate this, ALM employs a two-step process: initially, a coarse-grained neighbor selection is conducted using a manually-set threshold, followed by a fine-grained neighbor refinement through an attention mechanism. Details of the two stages are as follows. 

\vspace{4pt}
\noindent
\textit{- Stage I: Coarse-grained Neighbor Selection.}
To address the challenge of computing attention scores without prior knowledge of the graph structure, we initially utilize a preselected threshold, $T_h^0$, to perform a coarse-grained neighbor selection for each node, following the procedure in Eq.~\eqref{eq:threshold}. 
This initial threshold can be manually set or optimized by a process detailed in App.\ref{sec: thre_learning_method}, thereby eliminating the reliance on manual selection. Notably, unlike the threshold in the GCN-based model which can significantly influence the localization accuracy, this initial threshold $T_h^0$ can be chosen from a broader range, as our objective here is to establish a coarse-grained neighbor set, $\mathcal{N}_i^C$, for each node $i$. Verification of this assertion can be found in Fig.~\ref{fig:threshold_new} in Sec.~\ref{sec: numerical_results}. 

\vspace{4pt}
\noindent
\textit{- Stage II: Fine-grained Neighbor Refinement.}
To refine coarse-grained neighbor sets into fine-grained ones, we employ a distance-aware threshold matrix, which is calculated using attention scores. This matrix is employed to further refine the coarse-grained neighbor sets by comparison with the measured distance matrix.

Specifically, we incorporate the masked attention mechanism \cite{vaswani2017attention} into the coarse-grained graph structure. This means that we compute the attention scores $e_{ij}^A$ only for nodes $j$ within the coarse-grained neighbor set $\mathcal{N}_i^C$ of node $i$. Our designed attention scores $e_{ij}^A$ are calculated as follows:
\begin{equation}
	\label{eq: neighbor_selection_att}
	e_{ij}^A =\left| \phi\left( \bbx_{[i,:]}\bbW_A \right) -   \phi\left(\bbx_{[j,:]}\bbW_A \right) \right| \bbv_A,\;  j\in \ccalN_i^C.
\end{equation}
Here, $\bbx_{[i,:]}$ represents the original measured distance row vector. The attention mechanism in this context is parametrized by an attention weight vector $\bbv_A \in \mathbb{R}^{F_A}$ and an attention weight matrix $\bbW_{A}\in \mathbb{R}^{N \times F_A}$, and applies a nonlinear mapping $\phi(\cdot)$, such as LeakyReLU. The reasons for using Eq.~\eqref{eq: neighbor_selection_att} to compute the attention scores, as opposed to a single-layer neural network used in GAT/GATv2, are as follows:
\begin{itemize}[itemsep=0pt, topsep=2pt]
	\item Passing each node's feature through a linear transformation with $\bbW_A$ followed by a nonlinear activation function enhances the expressive power of feature transformations;
	\item Taking the absolute difference between two transformed representations can be regarded as a learnable distance metric between their representations, which possess the symmetry property that is not considered by GAT and GATv2 but is essential for network localization;
    \item The attention vector $\bbv_A$ learns an effective dimensionality reduction mapping to project the learnable distance metric into a scalar, serving as the output attention score.
\end{itemize}

To convert the attention scores into thresholds, which are comparable to the measured distances, we apply nonlinear scaling through:
\begin{equation}
	\label{eq: scale_att}
	T_{ij}^A  = \mathrm{max}(\bbx_{[i,:]})\cdot\mathrm{Sigmoid}({e}_{ij}^A),
\end{equation}
where $\mathrm{max}(\bbx_{[i,:]})$ identifies the maximum value from the row vector $\bbx_{[i,:]}$. Using Eq.~\eqref{eq: scale_att}, we obtain a distance-aware threshold matrix denoted as $\bbT^A\in\mathbb{R}^{N \times N}$.
With the distance-aware thresholds obtained within each coarse-grained neighbor set, we establish a criterion for selecting fine-grained neighbors by comparing the measured distance with the thresholds. Specifically, the adjacency can be represented as follows:
\begin{equation}\label{eq:thre_adj_att}
	a_{ij}=\begin{cases}
		1 ,\quad \mathrm{if} \quad  x_{ij} <T_{ij}^A, \\
		0,\quad \mathrm{otherwise}.
	\end{cases}
\end{equation}

Nevertheless, Eq.~\eqref{eq:thre_adj_att}, formulated as a step function, lacks differentiability concerning $\bbT^A$, which contains the parameters $\bbW_A$ and $\bbv_A$ that we intend to train using gradient-based optimization methods. Typically, conventional approaches often employ a Sigmoid function as an approximation for the step function. However, for our specific objective of achieving a sparse adjacency matrix by eliminating elements exceeding the threshold, this approach is inadequate as the Sigmoid function's transition at the threshold point is too smooth to effectively produce a sparse adjacency matrix. Consequently, it is essential to identify an approximation for Eq.~\eqref{eq:thre_adj_att} that is both differentiable concerning the trainable parameters and capable of filtering out values that exceed the threshold.

Based on the above requirements, we develop an appropriate approximated step function that addresses this issue. The approximated step function can be defined as follows:
\begin{equation}
\label{eq: Approx_step_Fun}
	\hat{\delta}(x) = \mathrm{ReLU}(\mathrm{tanh}(\gamma x)),
\end{equation}
where $\gamma$ is a hyperparameter for adjusting the steepness of the hyperbolic tangent (tanh) function.
Our devised approximated step function, employing ReLU and tanh, yields a notably sharper transition on the positive side and effectively truncates negative values to zero, demonstrating a more accurate approximation of the step function than the Sigmoid function. More importantly, its derivative can be obtained everywhere with respect to the trainable threshold. For more details, see App.~\ref{app: appro_step_func}.
Subsequently, the approximation of Eq.~\eqref{eq:thre_adj_att} can be expressed as:
\begin{align}
	a_{ij} = \mathrm{ReLU}(-\mathrm{tanh}(\gamma(x_{ij}-T_{ij}^A) )).
\end{align}
After obtaining the adjacency matrix $\bbA$ via the attentional method, we construct the truncated measurement matrix $\hat{\mathbf{X}}=\mathbf{A} \odot \mathbf{X}$. Subsequently, they are effectively used as inputs to feed the graph attention layer introduced below.

\vspace{5pt}
\subsection{Multiple Graph Attention Layers (MGAL)}
%\vspace{5pt}
\label{sec: graph_attention_layer}

{We assume that the adjacency matrix $\bbA$ has been learned via ALM. The fine-grained neighbor set of each node, denoted as $\ccalN_i^F$ for node $i$, is induced by $\bbA$.}
Subsequently, we introduce MGAL, a more generalized form of the graph attention mechanism inspired by GATv2 \cite{brody2022how}, to learn the aggregation weight and predict the positions of all nodes.

We begin by explicating the $k$-th individual graph attention layer, which functions as the foundational layer uniformly employed throughout the entire framework of MGAL. The node representation at the $k$-th layer is denoted as $\mathbf{H}^{(k)}\in\mathbb{R}^{N \times D_k}$, where $D_k$ is the hidden dimension of the $k$-th layer.
{The initial node representations are set as the truncated measurement matrix: $\mathbf{H}^{(0)} = \hat{\mathbf{X}}$.}
Here, we employ a single-layer feed-forward neural network, parametrized by an attention weight vector $\bbv_{att}^{(k)} \in \mathbb{R}^{F_{att}^{k}}$ and an attention weight matrix $\bbW_{att}^{(k)}\in \mathbb{R}^{2D_{{k+1}} \times F_{att}^{k}}$, as the attention mechanism $\mathrm{Att}(\cdot, \cdot)$. Then, the computation of the attention scores using Eq.~\eqref{eq: general_attention} can be expressed as:
\begin{equation}
    \label{eq: dynamic_GAT_att}
e_{ij}^{(k)} =\phi\left( \left[ {\hbh}_{[i,:]}^{(k)} \left\| {\hbh}_{[j,:]}^{(k)}\right.  \right]  \bbW_{att}^{(k)}  \right)  \bbv_{att}^{(k)},
\quad {j\in \ccalN_i^F,}
\end{equation}
where
\begin{equation}
\label{eq: feature_trans}
\hbh_{[i,:]}^{(k)} =\bbh_{[i,:]}^{(k)} \bbW^{(k)},\quad
\hbh_{[j,:]}^{(k)} =\bbh_{[j,:]}^{(k)} \bbW^{(k)}.  
\end{equation}
Here, ${\bbW^{(k)} }\in \mbR^{D_{k}\times D_{k+1}}$ denotes the weight matrix in the ${k}$-th graph attention layer. 

Notably, we use two distinct learnable weight matrices, $\bbW^{(k)}$ and $\bbW_{att}^{(k)}$, with distinct functionalities in our approach. Concretely, $\bbW^{(k)}$ in Eq.~\eqref{eq: dynamic_GAT_att} functions as a linear transformation, converting input representations into higher-level representations. In contrast, $\bbW_{att}^{(k)}$ collaborates with the weight vector $\bbv_{att}^{(k)}$ to establish an attentive rule, determining the correlation between any selected pair of nodes. Moreover, the proposed graph attention layer represents a more generalized form than GATv2 and can be reduced to GATv2 when $F_{att}^{k}=2D_{{k+1}}$ and $\bbW_{att}^{(k)}$ assumes an identity matrix.

To make the attention scores comparable across different nodes, we normalize them across all choices of $j$ using the $\mathrm{softmax}$ function:
\begin{equation}
	\label{eq: softmax_GAT_att}	
	\alpha_{ij}^{(k)} = \frac{\exp({e}_{ij}^{(k)} )}{\sum_{j'\in\ccalN_i^F} \exp({e}_{ij'}^{(k)} )}.
\end{equation}
Once obtained, the normalized attention scores $\alpha_{ij}^{(k)}$ are used to update $\bbh^{(k+1)}_{[i,:]}$ with nonlinearity $\varphi(\cdot)$ as:
\begin{equation}
\label{eq: dynamic_GAT_eqnatt}
	\bbh^{(k+1)}_{[i,:]} = \varphi\left(\sum_{j\in\ccalN_i^F} \alpha_{ij}^{(k)} \hbh_{[j,:]}^{(k)} \right).
\end{equation}
Analogous to the feature updating mechanism in each graph convolutional layer of the GCN-based method, Eq.~\eqref{eq: dynamic_GAT_eqnatt}  
can be decomposed into \textit{feature propagation}, \textit{linear transformation} and \textit{nonlinear activation}. The primary distinction lies in the \textit{feature propagation}: the GCN-based method employs a constant aggregation weight, whereas MAGL utilize learned aggregation weights through the attention mechanism.

In summary, we have proposed an ALM capable of autonomously learning optimal thresholds for individual edges through an attention mechanism. This ALM further leads to an adaptive adjacency matrix and the corresponding truncated measurement matrix. Subsequently, MGAL leverages these learned matrices and employs another attention mechanism to acquire adaptable aggregation weights, enhancing the expressiveness of the model. As a result, it effectively addresses the limitations of the classic GCN, as discussed in Sec.~\ref{Sec:Limitation_GCN}. 

Finally, we formulate the optimization problem for our proposed AGNN as follows:
\setlength{\jot}{1pt}
\begin{equation}
		\begin{aligned}
		\min_{\bbW_1,\bbW_2}~~~~~ \mathcal{L} &= \|\mathbf{P}_l-\hat{\mathbf{P}}_l \|^2_F\\[2\jot]
		\mathrm{s.t.} ~~~~~ \hat{\bbP} &= \text{MGAL}_{\bbW_2}(\bbA,\hat{\bbX})\\[\jot]
		\mathbf{A},\hat{\bbX} &= \text{ALM}_{\bbW_1}(\bbX, T_h^{0}).
	\end{aligned}
\end{equation}
Here, $\bbW_1$ and $\bbW_2$ denote all trainable matrices in ALM and MGAL, respectively. Similarly to Eq.~\eqref{Eq: opt_problem_GCN}, this optimization problem can be addressed through the application of gradient descent techniques  \cite{bottou2010large, kingma2014adam}.

\section{Performance Analysis}
\label{sec: perfomance_analysis}
In this section, we analyze our proposed AGNN model from the perspectives of two fundamental properties: dynamic attention property and complexity analysis. Through theoretical analysis, we aim to elucidate the reasons behind the superior performance of our AGNN in terms of model representational capacity and computational complexity.

\subsection{Dynamic Attention Property}
\label{dynamic_att_property}

In general, attention serves as a mechanism to compute a distribution over a set of input key vectors when provided with an additional query vector. In the context of network localization, the query vector and input key vectors represent the feature vectors of a target node and its neighbors\footnote{For clarity, we assume a complete graph (fully connected graph) in the subsequent discussions. Analogous results can be extended to the case of an incomplete graph, where the neighbor set of each node becomes a subset of the corresponding set in a complete graph.}, respectively. The attention mechanism in network localization involves the process of learning how to allocate attention weights to all neighboring nodes concerning the target node. In this subsection, we have demonstrated that, in contrast to the classical GAT model \cite{velivckovic2017graph} which is confined to computing static attention and thereby encounters significant limitations, our proposed graph attention mechanism can achieve dynamic attention \cite{brody2022how}.

Static attention \cite{brody2022how}, characterized by consistently assigning the highest weight to the same neighboring node for all target nodes, as visualized in Fig.~\ref{fig:Attention_Comparison} (a). It exhibits limited flexibility and is less effective in network localization scenarios, as it uniformly prioritizes a single node, ignoring relative distance and NLOS conditions. A more reasonable form of attention for network localization is the so-called dynamic attention \cite{brody2022how}. A dynamic attention function can assign the highest weight to the most relevant neighboring node based on each target node’s surrounding physical constraints.

In the classical GAT model \cite{velivckovic2017graph}, a scoring function $e\left(\bbh_i, \bbh_j\right)$ is employed to compute a score for every edge $\left(j,i\right)$, which indicates the importance of neighbor $j$'s feature to the node $i$:
\begin{equation}
	\label{eq: gat}
	e\left(\bbh_{[i,:]}, \bbh_{[j,:]}\right)=\phi\left(\left[\bbh_{[i,:]}\bbW \| \bbh_{[j,:]}\bbW\right]\bbv\right).
\end{equation}
\begin{theorem}
A GAT layer computes only static attention, for any set of node representations $\mathcal{K}=\mathcal{Q}=\{\bbh_{[1,:]},...,\bbh_{[N,:]}\}$.  {Specifically, for every function $e(\cdot,\cdot)$, there exists a “highest scoring” key $j_{m1}\in[N]$ such that for every query $i\in [N]$ and key $j\in[N]$, it holds that $e(\bbh_{[i,:]},\bbh_{[j_{m1},:]})\geq e(\bbh_{[i,:]},\bbh_{[j,:]})$.}
\label{theorem: static_GAT}	
\end{theorem}

\begin{myproof}
See App.~\ref{sec: proof: static_GAT} or refer to \cite{brody2022how}.
\end{myproof}

{Notably, based on the above proof, we can further conclude that in static attention, not only is the key with the highest weight fixed to $j_{m1}$, but the relative order of attention weights for all keys is also fixed, independent of the query. This phenomenon is also illustrated in Fig.~\ref{fig:Attention_Comparison} (a).}
The primary issue with the scoring function given in Eq.~\eqref{eq: gat} for the standard GAT is that the learned weights, $\bbW$ and $\bbv$, are applied in succession, which allows them to be condensed into a single linear layer.

Compared with the standard GAT, each graph attention layer \eqref{eq: dynamic_GAT_att} in MGAL simply applies the attention weight matrix $\bbW_{att}$ after the concatenation, and the attention weight vector $\bbv_{att}$ after the nonlinearity $\phi(\cdot)$.
Next, we show that our proposed graph attention layer addresses the limitation of static attention in GAT and possesses a significantly more expressive dynamic attention property.

\begin{theorem}
	\label{thm: dynamic_attention}
	The graph attention function in Eq.~(\ref{eq: dynamic_GAT_att}, \ref{eq: feature_trans}) computes dynamic attention for any set of node representations $\mathcal{K}=\mathcal{Q}=\{\bbh_{[1,:]},...,\bbh_{[N,:]}\}$. {Specifically, for any mapping $\varphi:[N]\to [N]$, there exists a function $e(\cdot,\cdot)$ such that for every query $i\in [N]$ and key $j_{\neq \varphi(i)}\in[N]$, the property $e(\bbh_{[i,:]},\bbh_{[\varphi(i),:]})\geq e(\bbh_{[i,:]},\bbh_{[j,:]})$ holds.}
\end{theorem}

\begin{myproof}
	See App.~\ref{proof: dynamic_attention}.
\end{myproof}

The core idea in this proof centers around the principle that the graph attention function in Eq.~(\ref{eq: dynamic_GAT_att}, \ref{eq: feature_trans}) can be interpreted as a single-layer graph neural network. Consequently, it can be a universal approximator \cite{hornik1991approximation} of an appropriate function we define, thereby achieving dynamic attention. 

In addition, within the ALM, we have devised a graph attention function, as given in Eq.~\eqref{eq: neighbor_selection_att}, with the specific goal of implementing a distance-aware attention mechanism. In the sequel, we show that this function also possesses the dynamic attention property, a critical aspect contributing to its efficacy in refining neighbors at a fine-grained level.
\begin{theorem}
	\label{thm: dynamic_attention_ALM-II}
	The graph attention function in Eq.~\eqref{eq: neighbor_selection_att} computes dynamic attention for any set of node representations $\mathcal{K}=\mathcal{Q}=\{\bbx_{[1,:]},...,\bbx_{[N,:]}\}$.
\end{theorem}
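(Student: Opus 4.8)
The plan is to mirror the strategy behind Thm.~\ref{thm: dynamic_attention}: realize the scoring function in Eq.~\eqref{eq: neighbor_selection_att} as the output of a shallow neural network acting on the pair $(\bbx_{[i,:]},\bbx_{[j,:]})$, and then invoke the universal approximation theorem \cite{hornik1991approximation}. Concretely, for an arbitrary fixed mapping $\varphi:[N]\to[N]$ I want to exhibit parameters $\bbW_A,\bbv_A$ (and an admissible nonlinearity $\phi$, e.g.\ LeakyReLU) so that for every node $i\in[N]$ and every $j\in[N]\setminus\{\varphi(i)\}$ one has $e^A_{i,\varphi(i)}>e^A_{ij}$; by Def.~\ref{def:dynamic} this is exactly what must be shown. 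As in the remark preceding Def.~\ref{def:static}, I argue on a complete graph; the masking to $\ccalN_i^C$ only shrinks each neighbor set, so the conclusion transfers verbatim to the non-complete case used in practice.

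The first step is the structural observation that $e^A(\bbx_{[i,:]},\bbx_{[j,:]})=\bbv_A^{\top}\big|\phi(\bbx_{[i,:]}\bbW_A)-\phi(\bbx_{[j,:]}\bbW_A)\big|$ is computed by a two-layer network: a shared first layer $\bbz\mapsto\phi(\bbz\bbW_A)$ applied to each of $\bbx_{[i,:]}$ and $\bbx_{[j,:]}$, an elementwise absolute-difference unit (itself expressible as $\mathrm{ReLU}(a-b)+\mathrm{ReLU}(b-a)$, hence realizable inside a slightly wider standard hidden layer), and a linear read-out $\bbv_A$. Since $\{\bbx_{[1,:]},\dots,\bbx_{[N,:]}\}$ is a finite set of distinct points, the set of input pairs $\{(\bbx_{[i,:]},\bbx_{[j,:]}):i,j\in[N]\}$ is finite and compact.

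The second step builds the target and closes the argument. Choose a continuous $g:\mathbb{R}^{N}\times\mathbb{R}^{N}\to\mathbb{R}$ with $g(\bbx_{[i,:]},\bbx_{[\varphi(i),:]})=1$ for every $i$ and $g(\bbx_{[i,:]},\bbx_{[j,:]})=0$ whenever $j\neq\varphi(i)$; such a $g$ exists because finitely many distinct points are separated by bump functions. By universal approximation, for $F_A$ large enough there are $\bbW_A,\bbv_A$ with $\big|e^A(\bbx_{[i,:]},\bbx_{[j,:]})-g(\bbx_{[i,:]},\bbx_{[j,:]})\big|<\tfrac14$ at every pair, whence $e^A_{i,\varphi(i)}>\tfrac34>\tfrac14>e^A_{ij}$ for all $i$ and all $j\neq\varphi(i)$. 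The contrast with GAT is then immediate: by Thm.~\ref{theorem: static_GAT} the GAT score collapses to a single linear functional inducing a node-independent ranking, whereas the network above can make a different neighbor maximal for each node.

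The hard part — and where the argument cannot just be copied from the GATv2 proof — is that Eq.~\eqref{eq: neighbor_selection_att} is a \emph{constrained} architecture: the two input branches share the single matrix $\bbW_A$, and the absolute-difference combiner forces the symmetry $e^A_{ij}=e^A_{ji}$ (and, with no bias term, joint positive homogeneity in $(\bbx_{[i,:]},\bbx_{[j,:]})$). So I cannot approximate an \emph{arbitrary} separator $g$; I must either (i) verify that the tied-weight, symmetric map $\bbv_A^{\top}|\phi(\cdot\,\bbW_A)-\phi(\cdot\,\bbW_A)|$ is still rich enough on the finite point set to realize a \emph{symmetric} separator — which is the natural class here, being consistent with $x_{ij}=x_{ji}$, and which still yields the asymmetric neighbor selection once the per-row scaling of Eq.~\eqref{eq: scale_att} is applied — or (ii) restrict the statement to mappings $\varphi$ compatible with a symmetric attention pattern, the relevant class for localization. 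I would pursue (i): pick $\bbW_A$ so that the features $\{\phi(\bbx_{[i,:]}\bbW_A)\}_{i=1}^{N}$ are in general position, note the resulting vectors $|\phi(\bbx_{[i,:]}\bbW_A)-\phi(\bbx_{[j,:]}\bbW_A)|$ are pairwise distinct, and, using a hidden width $F_A$ at least of the order of the number of distinct pairs, solve for $\bbv_A$ so that the inner products reproduce the desired symmetric ordering — a shallow-network memorization argument adapted to the tied-weight structure.
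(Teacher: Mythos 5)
You have correctly identified the decisive obstruction --- the symmetry $e^A_{ij}=e^A_{ji}$ forced by the shared matrix $\bbW_A$ and the absolute-difference combiner --- but your proposal does not (and cannot) close the resulting gap, because under the literal Def.~\ref{def:dynamic} that symmetry is fatal rather than merely inconvenient. Take $N\ge 3$ and the cyclic mapping $\varphi(1)=2$, $\varphi(2)=3$, $\varphi(3)=1$. Dynamic attention would require $e^A_{12}>e^A_{13}$, $e^A_{23}>e^A_{21}$ and $e^A_{31}>e^A_{32}$; with $e^A_{ij}=e^A_{ji}$ these chain into $e^A_{12}>e^A_{13}>e^A_{23}>e^A_{12}$, a contradiction. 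Hence no choice of $\bbW_A,\bbv_A,\phi$ realizes this $\varphi$, your route (i) cannot succeed for arbitrary $\varphi$, and your route (ii) is not an optional weakening but the only honest version of the claim. The per-row rescaling of Eq.~\eqref{eq: scale_att} is monotone within each row, so it preserves the within-row ranking of the $e^A_{ij}$ and cannot repair this. In short, the universal-approximation template of Thm.~\ref{thm: dynamic_attention} genuinely does not transfer here, exactly as you suspected, and the memorization argument you sketch for the tied-weight symmetric class would at best realize mappings $\varphi$ whose preference pattern is consistent with a symmetric score matrix.

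For comparison, the paper's own proof takes an entirely different and much more modest route: it expands $e^A_{ij}=\sum_{k=1}^{F_A}u^k_{ij}$ with $u^k_{ij}=\left|\phi\left(\bbx_{[i,:]}[\bbW_A]_k\right)-\phi\left(\bbx_{[j,:]}[\bbW_A]_k\right)\right|[\bbv_A]_k$, sorts the scalars $c^k_j=\phi\left(\bbx_{[j,:]}[\bbW_A]_k\right)$, and observes that the neighbor maximizing $u^k_{ij}$ is one of the sorted extremes when $[\bbv_A]_k>0$ and one of the immediate sorted neighbors of $c^k_i$ when $[\bbv_A]_k<0$, so that the winning neighbor depends on the query node $i$. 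That argument refutes \emph{static} attention in the sense of Def.~\ref{def:static}, but it never exhibits, for a prescribed $\varphi$, parameters achieving $e^A_{i,\varphi(i)}>e^A_{ij}$ for all $i$ and all $j\ne\varphi(i)$, so it too falls short of Def.~\ref{def:dynamic}; your symmetry observation makes precise why no proof could. A salvageable statement would either restrict to mappings $\varphi$ compatible with a symmetric score matrix (e.g.\ involutive $\varphi$) or assert only the query-dependence property that the paper's argument actually establishes.
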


\begin{myproof}
	See App.~\ref{proof: dynamic_attention_ALM-II}.
\end{myproof}

This proof decomposes Eq.\eqref{eq: neighbor_selection_att} into a summation form and focuses on the $k$-th term. It shows that the maximum value of this term is not fixed but depends on the set $\ccalN^C$ and the $k$-th element of the attention weight vector $\bbv_A$, denoted as $[\bbv_A]_k$. By sorting node representations and considering different cases based on the sign of $[\bbv_A]_k$, the dynamic attention property is established for each term in the summation, thereby ensuring the dynamic attention property for the entire function.

Overall, our proposed graph attention mechanisms in ALM and MGAL inherit the dynamic attention property, making them inherently more expressive than the standard GAT. In the context of network localization, this property ensures the optimal selection of threshold and aggregation weights for node pairs based on their features, effectively addressing the limitations in Sec.~\ref{Sec:Limitation_GCN}.

\subsection{Complexity Analysis}

From a computational perspective, our proposed AGNN exhibits high efficiency. The operations of the graph attention layer in MGAL can be parallelized across all edges, and the computations of output features can similarly be parallelized across all nodes. Notably, these models obviate the need for computationally expensive matrix operations, such as eigen-decompositions, which are typically mandatory in spectral-based methods \cite{bruna2014spectral, defferrard2016convolutional}.

We provide the time complexity analysis for GCN-based localization models as follows:
\begin{theorem}\label{thm: gcn_complexity}
	For the $k$-th  graph convolutional layer, the time complexity can be summarized as $O(ND_{k-1}D_k + |E|D_{k-1})$, where $|E|$ represents the numbers of graph edges constructed based on a given threshold.
\end{theorem}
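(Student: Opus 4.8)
The plan is to decompose a single graph convolutional layer into the three operations identified in the text — feature propagation $\bar{\mathbf{H}}^{(k)}\leftarrow\hat{\mathbf{A}}\mathbf{H}^{(k-1)}$ in Eq.~\eqref{eq:update_matrix}, the linear transformation, and the entrywise activation in Eq.~\eqref{eq:gcn_propagation} — bound the cost of each separately, and then sum. This is an operation count, so the proof is essentially bookkeeping; the only conceptual point is that the propagation step should be treated as a \emph{sparse}-times-dense product.

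First I would handle the propagation step. The key observation is that $\hat{\mathbf{A}}=\mathbf{D}^{-1/2}\mathbf{A}\mathbf{D}^{-1/2}$ inherits the sparsity pattern of $\mathbf{A}$, hence has $O(|E|)$ nonzero off-diagonal entries (the self-loops add a further $O(N)$, which will be absorbed below). Multiplying such a matrix by the dense $N\times D_{k-1}$ matrix $\mathbf{H}^{(k-1)}$ costs $O(|E|D_{k-1})$, since each nonzero entry $\hat{a}_{ij}$ triggers one scaled addition of a length-$D_{k-1}$ row; this is where the graph structure (and the thresholding that keeps $|E|$ small) enters, in place of the naive $O(N^2 D_{k-1})$. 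If $\hat{\mathbf{A}}$ is not precomputed, forming it from $\mathbf{A}$ and the degrees costs an additional $O(|E|)$, again of lower order. Next I would handle the dense transformation $\bar{\mathbf{H}}^{(k)}\mathbf{W}^{(k)}$, a product of an $N\times D_{k-1}$ by a $D_{k-1}\times D_k$ matrix, costing $O(ND_{k-1}D_k)$; and finally the entrywise $\sigma(\cdot)$ on the $N\times D_k$ output, costing $O(ND_k)$, which together with the $O(N)$ self-loop term is dominated by $O(ND_{k-1}D_k)$.

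Summing the three contributions yields $O(ND_{k-1}D_k+|E|D_{k-1})$, as claimed. I would also note that this bound corresponds to associating the product as $(\hat{\mathbf{A}}\mathbf{H}^{(k-1)})\mathbf{W}^{(k)}$; the alternative order $\hat{\mathbf{A}}(\mathbf{H}^{(k-1)}\mathbf{W}^{(k)})$ gives $O(ND_{k-1}D_k+|E|D_k)$, so the stated form is the natural one when $D_{k-1}\ge D_k$. There is no real obstacle here — the main thing to get right is making the sparse-matrix cost explicit (cost proportional to the number of edges, not $N^2$) and fixing the accounting conventions, namely whether $\hat{\mathbf{A}}$ is assumed precomputed and that the $O(N)$ self-loop and activation terms are absorbed into the dense term.
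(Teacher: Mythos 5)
Your proposal is correct and follows essentially the same route as the paper: the propagation step is costed as a sparse-times-dense product at $O(|E|D_{k-1})$ (the paper organizes this count row-by-row via $2|E|=\sum_i|\mathcal{N}_i|$, you count per nonzero entry, which is the same tally), and the dense transformation contributes $O(ND_{k-1}D_k)$. Your extra remarks on the activation cost, the self-loop term, and the associativity order are harmless refinements the paper leaves implicit.
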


\begin{proof}
	See App.~\ref{proof: gcn_complexity}.
\end{proof}

Crucially, the time complexity for our proposed AGNN is presented as follows:
\begin{theorem}\label{thm: agnn_complexity}
	The time complexity for ALM is $O(NNF_A+|E^C|F_A)$.
	For the $k$-th  graph attention layer of MGAL, the time complexity can be summarized as $O(ND_{k}F_{att}+ND_{k-1}D_k + |E^F|F_{att}+ |E^F|D_k)$, where $|E^C|$ and $|E^F|$ denote the numbers of edges in the coarse-grained and fine-grained neighbor sets, respectively.
\end{theorem}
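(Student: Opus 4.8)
The plan is to prove each bound by decomposing the module in question into its elementary linear-algebra operations and summing their costs, exactly as in the accounting behind Thm.~\ref{thm: gcn_complexity} for GCN. The only idea beyond routine counting is the one already used in the GAT-style analysis: every weight-matrix multiplication should be performed once \emph{per node} and then reused across its incident edges rather than recomputed per edge, and \emph{masked} attention (computing $e_{ij}$ only for $j$ in the learned neighbor set) confines the edge-indexed work to $|E^C|$ or $|E^F|$ instead of all $N^2$ pairs. With that in mind the three bounds follow by inspection of the defining equations.

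For ALM-I, the only computation genuinely introduced on top of the GCN baseline is the trainable vector $\bbt^v\in\mathbb{R}^N$ and its rescaling $\hbt^v=l_\text{max}\cdot\mathrm{sigmoid}(\bbt^v)$ in Eq.~\eqref{eq: threshold_transform}, an element-wise map on an $N$-vector, hence $O(N)$; forming $\bbA$ and $\hbX$ via Eq.~\eqref{eq: adj_transform}--\eqref{eq: x_transform} is the same element-wise masking of $\bbX$ that producing $\hbX=\bbA\odot\bbX$ already requires, so charging only the incremental cost gives $O(N)$. For ALM-II I would split along the two stages of Sec.~\ref{sec: attentional_neighbor_selection}: coarse selection compares each $x_{ij}$ with $T_h^0$, an $O(N^2)$ pass producing the sets $\ccalN_i^C$ with $\sum_i|\ccalN_i^C|=|E^C|$; then in Eq.~\eqref{eq: neighbor_selection_att} one precomputes $\phi(\bbx_{[i,:]}\bbW_A)$ once for each node, which, with $\bbW_A\in\mathbb{R}^{N\times F_A}$ and $\bbx_{[i,:]}\in\mathbb{R}^N$, costs $O(NF_A)$ per node and $O(NNF_A)$ in total (dominating the $O(N^2)$ of the coarse pass); finally, for each $j\in\ccalN_i^C$ the residual work---element-wise absolute value on an $F_A$-vector, inner product with $\bbv_A\in\mathbb{R}^{F_A}$, the scaling of Eq.~\eqref{eq: scale_att}, and the approximating step function---is $O(F_A)$ per coarse edge, i.e.\ $O(|E^C|F_A)$. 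Summing yields $O(NNF_A+|E^C|F_A)$.

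For the $k$-th graph attention layer of MGAL I would account for four groups of operations. (i) The per-node linear transform $\hbh_{[i,:]}^{(k-1)}=\bbh_i^{(k-1)}\bbW^{(k-1)}$ of Eq.~\eqref{eq: feature_trans}, with $\bbW^{(k-1)}\in\mathbb{R}^{D_{k-1}\times D_k}$, is $O(D_{k-1}D_k)$ per node, hence $O(ND_{k-1}D_k)$. (ii) Using the additive form of the scorer in Eq.~\eqref{eq: dynamic_GAT_att_rewrite}, precompute $\hbh_{[i,:]}^{(k-1)}\bbW_{att}^{l}$ and $\hbh_{[i,:]}^{(k-1)}\bbW_{att}^{r}$ once per node, each an $O(D_kF_{att})$ matrix--vector product, for $O(ND_kF_{att})$ total. (iii) For each $j\in\ccalN_i^F$ (with $\sum_i|\ccalN_i^F|=|E^F|$), forming $e_{ij}^{(k-1)}$ from the two precomputed $F_{att}$-vectors, applying $\phi$, and taking the inner product with $\bbv_{att}^{(k-1)}\in\mathbb{R}^{F_{att}}$ is $O(F_{att})$, and the softmax of Eq.~\eqref{eq: softmax_GAT_att} is $O(1)$ amortized per edge, so this stage is $O(|E^F|F_{att})$. (iv) The aggregation in Eq.~\eqref{eq: dynamic_GAT_eqnatt}, a scalar-times-$D_k$-vector multiply-accumulate per edge, is $O(|E^F|D_k)$. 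Adding the four groups gives $O(ND_kF_{att}+ND_{k-1}D_k+|E^F|F_{att}+|E^F|D_k)$; using $L$ heads multiplies the constants (and the output width) but not the asymptotics.

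The main obstacle is not conceptual but one of bookkeeping to keep the bounds tight: one must notice that the expensive weight-matrix products in both the ALM-II scorer (Eq.~\eqref{eq: neighbor_selection_att}) and the MGAL scorer (via Eq.~\eqref{eq: dynamic_GAT_att_rewrite}) are per-node, so they contribute $O(N^2F_A)$ and $O(ND_kF_{att})$ respectively rather than an edge-indexed $N^2$ term, and that masked attention restricts the remaining per-edge cost to $|E^C|$ and $|E^F|$. One should also fix a convention for whether the $O(N^2)$ cost of materializing the masked matrices is charged to the ALM or to input preparation---the stated $O(N)$ bound for ALM-I reflects the latter choice, consistently with how graph construction is accounted for in Thm.~\ref{thm: gcn_complexity}.
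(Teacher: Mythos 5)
Your proposal is correct and follows essentially the same accounting as the paper's proof: precompute the per-node weight-matrix products once (giving the $O(NNF_A)$, $O(ND_{k-1}D_k)$ and $O(ND_kF_{att})$ terms), then charge only $O(F_A)$ or $O(F_{att})+O(D_k)$ per masked edge to obtain the $|E^C|$- and $|E^F|$-indexed terms. Your explicit remark that Eq.~\eqref{eq: adj_transform}--\eqref{eq: x_transform} are element-wise over an $N\times N$ matrix, so the stated $O(N)$ bound for ALM-I only holds under a convention charging that pass to input preparation, is in fact more careful than the paper's proof, which simply asserts $O(N)$ for those steps.
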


\begin{proof}
	See App.~\ref{proof: agnn_complexity}.
\end{proof}

If we make the assumptions that $F_{att}$, $D_{k-1}$, $D_k$, and $N$ are approximately equal and that $|E|$, $|E^C|$, and $|E^F|$ are also roughly equivalent, then the time complexity of our proposed AGNN is comparable to that of the baseline GCN-based model.
It's worth noting that when applying multi-head attention, the storage and parameter requirements increase by a factor of $K$, but the computations of individual heads are entirely independent and can be effectively parallelized.

\section{Numerical Results}
\label{sec: numerical_results}

\begin{table}[t]
	\centering
	\caption{The simulated localization scenarios and computer implementation details.}
	\label{tab:settings}       % Give a unique label
    \setlength\tabcolsep{3pt} % Adjust the space between table columns
    \footnotesize % Make the text size smaller
	\begin{tabular}{c|ccccc}
            \hline
		Scenarios  & \multicolumn{2}{c}{Size: 5m$\times$5m} & \multicolumn{3}{c}{\# Nodes (\# Anchors):  500 (20-160)}  \\
		\hline
            \multirow{2}{*}{Configuration}  & \multicolumn{5}{c}{48 Inter Xeon E5-2650 2.2GHz CPUs}\\
                           & \multicolumn{5}{c}{8 NVIDIA TITAN Xp 12GB GPUs}  \\
		\hline
		\multirow{2}{*}{Model}   & 	Layers & Hidden size & Epochs& Learning rate & Dropout  \\
		&   2  & 2000 & 200 & 0.01 & 0.5  \\
		\hline
		\multirow{3}{*}{Threshold}   &  \multicolumn{5}{c}{$T_h=1.2$ for GCN and MLP} \\
		&   \multicolumn{5}{c}{$T_h^0=3.0$ for ALM}  \\
		&   \multicolumn{5}{c}{$T_h=0.6$ for benchmarks}  \\
		\hline
	\end{tabular}
\end{table}

\begin{table}[t] 
	\centering
	\small
	\caption{The averaged loss (RMSE) of all methods under different noise conditions for $N_l=50$.}
	\label{tab:1}
	\renewcommand{\arraystretch}{1.2}
        \setlength\tabcolsep{0.5pt} % Adjust the space between table columns
        \footnotesize % Make the text size smaller
	\begin{tabular}{c|ccccc}
		\hline
		\multicolumn{1}{c|}{\multirow{2}{*}{Methods}} & \multicolumn{5}{c}{Noise $(\sigma^2,p_B)$} \\
		\cline{2-6}
		& $(0.04,0\%)$ & $(0.10,10\%)$ & $(0.25,10\%)$ & $(0.25,30\%)$ & $(0.50,50\%)$  \\
		\hline
		LS\cite{Wymeersch2009}  & 0.2270 & 0.2675 & 0.3884 & 0.4187 & 0.7992\\
            MDS\cite{Costa2006} & 0.2361 & 0.2394 & 0.2822 & 0.7593 & 1.1940 \\
		ECM\cite{Yin_ECM}  & 0.1610 & 0.1857 & 0.3298 & 0.3824 & 0.8011\\
		SDP\cite{jin2021exploiting}  & 0.1171 & 0.2599 & 0.4891 & 0.4641 & 0.9294\\
            {SMILE\cite{clark2023smile}} & {0.3235} & {0.3346} & {0.3797} & {0.5755} & {0.7467} \\
		MLP  & 0.1865 & 0.1769 & 0.2305 & 0.2623 & 0.3358\\
            {GraphSage\cite{hamilton2017inductive}} & {0.1008} & {0.1066} & {0.1124} & {0.1276} & {0.1598} \\
            {GAT\cite{velivckovic2017graph}} & {0.1194} & {0.1178} & {0.1215} & {0.1423} & {0.1783} \\
            {GATv2\cite{brody2022how}} & {0.0899} & {0.0942} & {0.0983} & {0.1107} & {0.1471} \\
        \hdashline
		GCN\cite{yan2021graph}  & 0.1038 & 0.1128 & {0.1006} & {0.1302} & {0.1755}\\
        % \hdashline
            GCN$_{10000}$ & 0.0771$\downarrow$ & 0.0806$\downarrow$ & 0.0932$\downarrow$ & 0.0951$\downarrow$ & 0.1217$\downarrow$\\
        \hdashline
		AGNN  & \textbf{0.0486} & \textbf{0.0551} & \textbf{0.0638}  & \textbf{0.0812} & \textbf{0.1015}\\
        % \hdashline
            AGNN$_{10000}$  & 0.0378$\downarrow$ & 0.0489$\downarrow$ & 0.0585$\downarrow$ & 0.0784$\downarrow$ & 0.0960$\downarrow$\\
		\hline
            CRB  & 0.0202 & 0.0344 & 0.0453 & 0.0635 & 0.0941\\
            \hline
	\end{tabular}
\end{table}

\begin{figure}[t]
    \centering
    \begin{subfigure}[b]{0.95\linewidth}
        \centering
        \includegraphics[width=\linewidth]{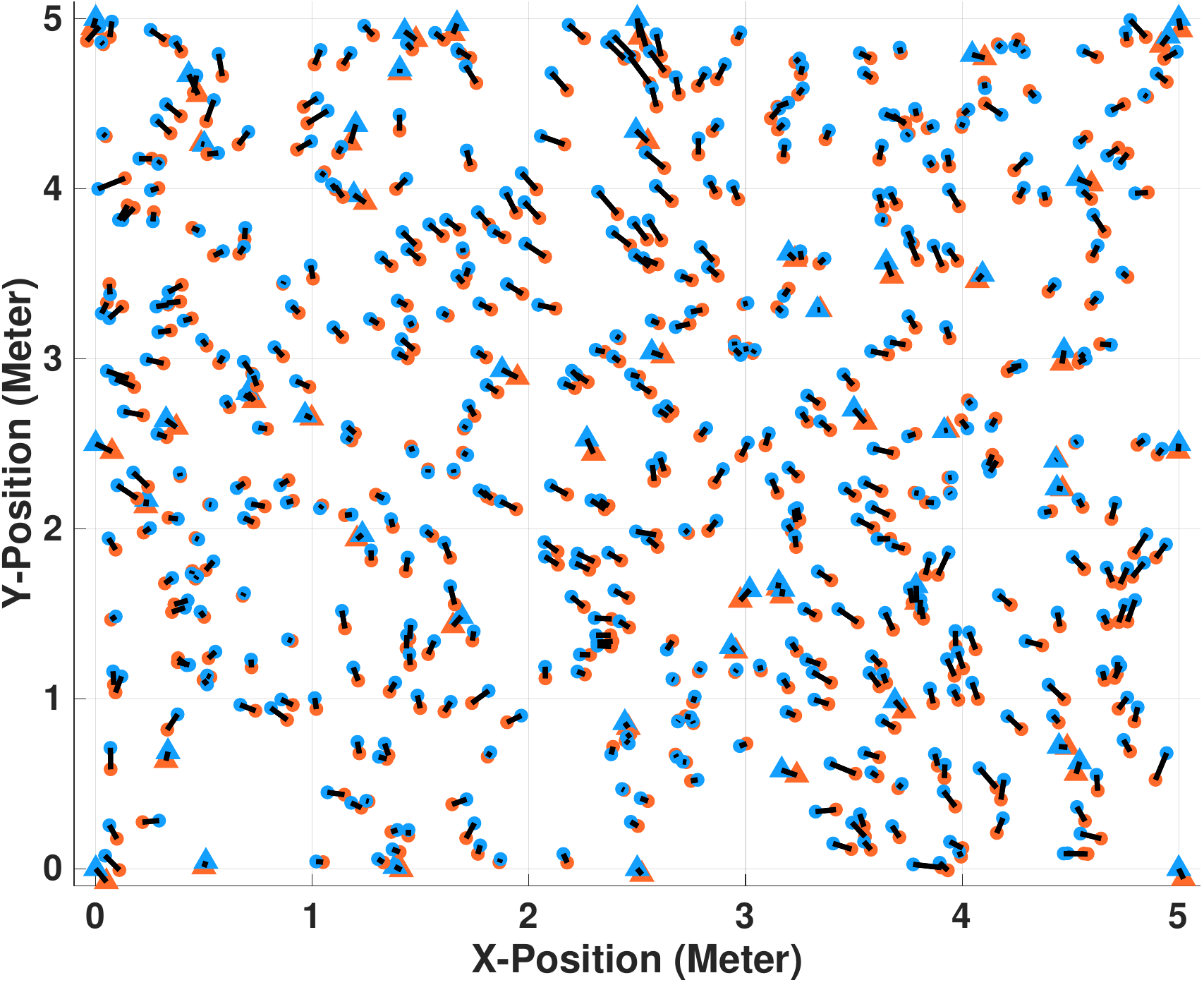}
        \caption{Scatter plot of predicted and true positions. Herein, blue nodes represent the true positions, while orange nodes indicate the predicted positions. Large triangles denote anchor nodes, whereas small circles represent agent nodes. Additionally, black lines depict localization errors between the true positions and those predicted by AGNN.}
        \label{fig: scatter_plot}
    \end{subfigure}

    \vspace{0.5cm} % Adjust this value to control the space between subfigures

    \begin{subfigure}[b]{1\linewidth}
        \centering
        \includegraphics[width=\linewidth]{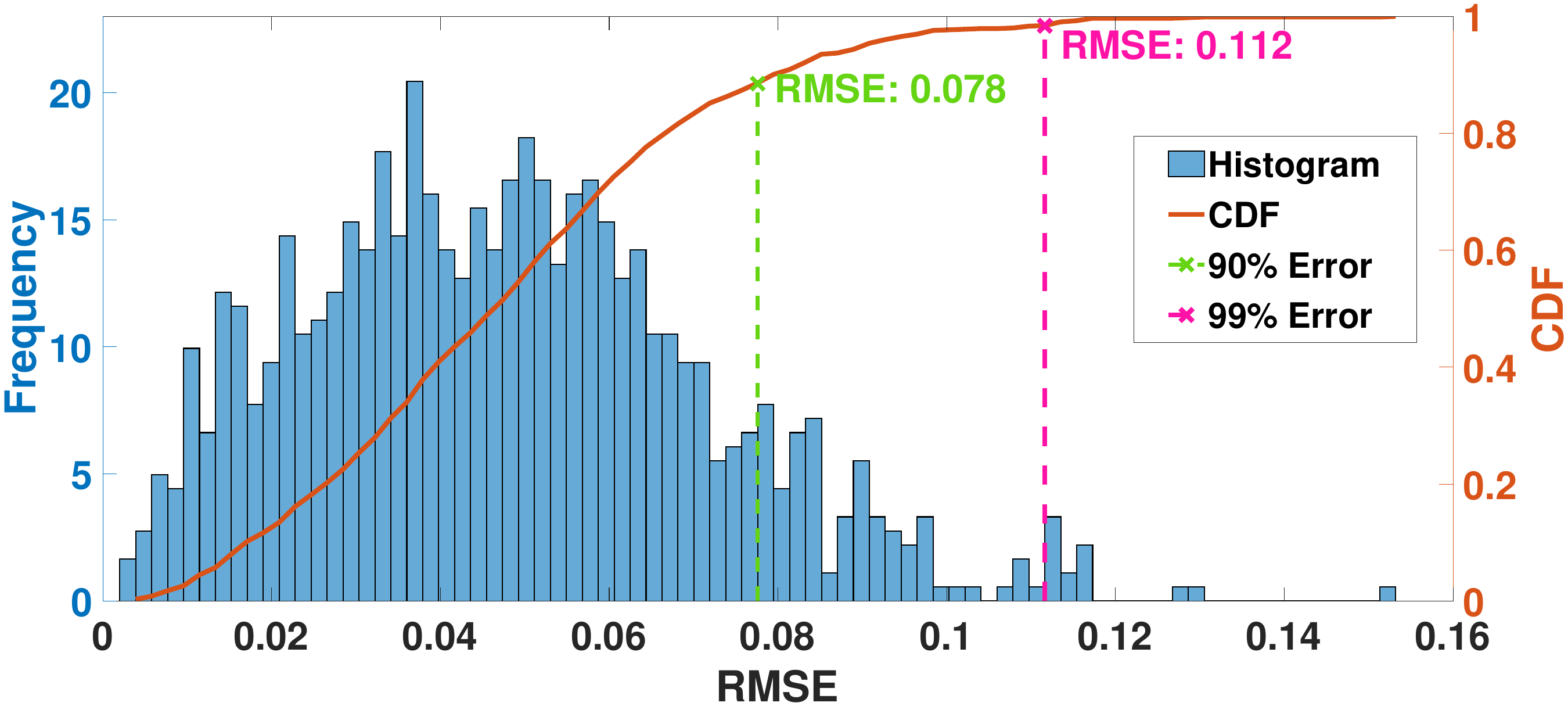}
        \caption{Histogram and CDF of agents' RMSE.}
        \label{fig: histogram_cdf}
    \end{subfigure}
    
    \caption{Visualization of AGNN localization performance with $(\sigma^2=0.1, p_B=0\%)$ and $N_l = 50$.}
    \label{fig: combined_figure}
\end{figure}

In this section, we evaluate the performance of several proposed GNN-based methods in terms of their localization accuracy, robustness against NLOS noise, and computational time. For benchmarks, we employ several classic statistical methods including the sparsity-inducing Semi-Definite Programming (SDP) method \cite{jin2021exploiting}, the Expectation-Conditional Maximization (ECM) method \cite{Yin_ECM}, the Multi-Dimensional Scaling (MDS) method \cite{Costa2006}, the centralized Least-Squares (LS) method \cite{Wymeersch2009}. {We further include the recently proposed SMILE model \cite{clark2023smile}, which leverages low-rank matrix decomposition for network localization.
To complement these comparisons, we evaluate our methods against well-established data-driven baselines, specifically MLP, GraphSAGE\cite{hamilton2017inductive}, GAT\cite{velivckovic2017graph}, and GATv2\cite{brody2022how}.} Note that the purpose of incorporating the MLP-based method, which exclusively employs a \textit{linear transformation} and \textit{nonlinear activation} in each layer, into our comparison is to illustrate the performance enhancements attributed to the inclusion of graph structure in each GNN layer. {Since the adjacency matrix is not directly available in network localization, all GNN-based baselines utilize the same threshold-based adjacency matrix construction strategy as the GCN-based methods.}

Simulated localization scenarios and the computer implementation details are shown in Tab.~\ref{tab:settings}. Here, the measurement error, $n_{ij}$, is generated according to Eq.~\eqref{Eq:noise_generated}. The positive NLOS bias, $n_{ij}^N$, is generated from a uniform distribution\footnote{Additional types of NLOS settings and corresponding results can be found in the App.~\ref{app: analysis_CRB}.}, $n_{ij}^N\sim\mathcal{U}[0, 10]$.
The localization accuracy is measured in terms of the RMSE between estimated positions and true positions of agent nodes, 
$\mathcal{L}_P:=\|\mathbf{P}_u-\hat{\mathbf{P}}_u \|_F$, where $\mathbf{P}_u = [\mathbf{p}_{N_l+1},\mathbf{p}_{N_l+2},\dots,\mathbf{p}_N]^{\top}$ and $\hat{\mathbf{P}}_u =  [\hat{\mathbf{p}}_{N_l+1},\hat{\mathbf{p}}_{N_l+2},\dots, \hat{\mathbf{p}}_{N}]^{\top}$.

Next, we will evaluate the effectiveness of our proposed AGNN, focusing on two primary aspects: its overall superiority concerning accuracy, robustness, and efficiency; and the insights into how the ALM, MGAL, and dynamic attention property contribute to AGNN's exceptional performance.

\subsection{Superiority of AGNN}

\vspace{4pt}
\noindent
\textit{- Overall Performance Analysis.}
Primarily, we conduct a comprehensive evaluation of the localization accuracy across diverse noise conditions and present the results in Tab.~\ref{tab:1}. {Our results reveal consistent superiority of the GNN-based methods (including GCN \cite{yan2021graph}, GraphSage, GAT, GATv2, and newly proposed AGNN) compared to all other benchmarks. 
Notably, their performance in high NLOS environments underscores the robustness and effectiveness of GNN-based localization approaches.
Among the evaluated methods, AGNN achieves the best overall performance across all noise conditions, which highlights the effectiveness of attention mechanisms in learning both adjacency and propagation weights.}
{For better visualization, the predicted positions obtained from the AGNN are depicted in Fig.~\ref{fig: combined_figure} (a) along with the true positions. Additionally, the histogram and Cumulative Distribution Function (CDF) of agents' RMSE for the corresponding scenario are provided in Fig.~\ref{fig: combined_figure} (b).}
To validate the effectiveness of our proposed GNN-based methods in massive networks, we test the localization performance of GCN and AGNN on networks with $N=10000$, denoted by GCN$_{10000}$ and AGNN$_{10000}$, respectively. The results show that both GCN and AGNN perform better, corroborating the positive role played by agents as neighbors in graph aggregation and combination processes.
Ultimately, the AGNN exhibits a minimal performance deviation in comparison to the Cramér-Rao Bound (CRB) \cite{Yin_ECM}, detailed in App.~\ref{app: analysis_CRB}. This observation attests to the AGNN's capability to approach the optimal theoretical performance limits.

\begin{figure}[t] 
	\centering
	\includegraphics[width=1\linewidth]{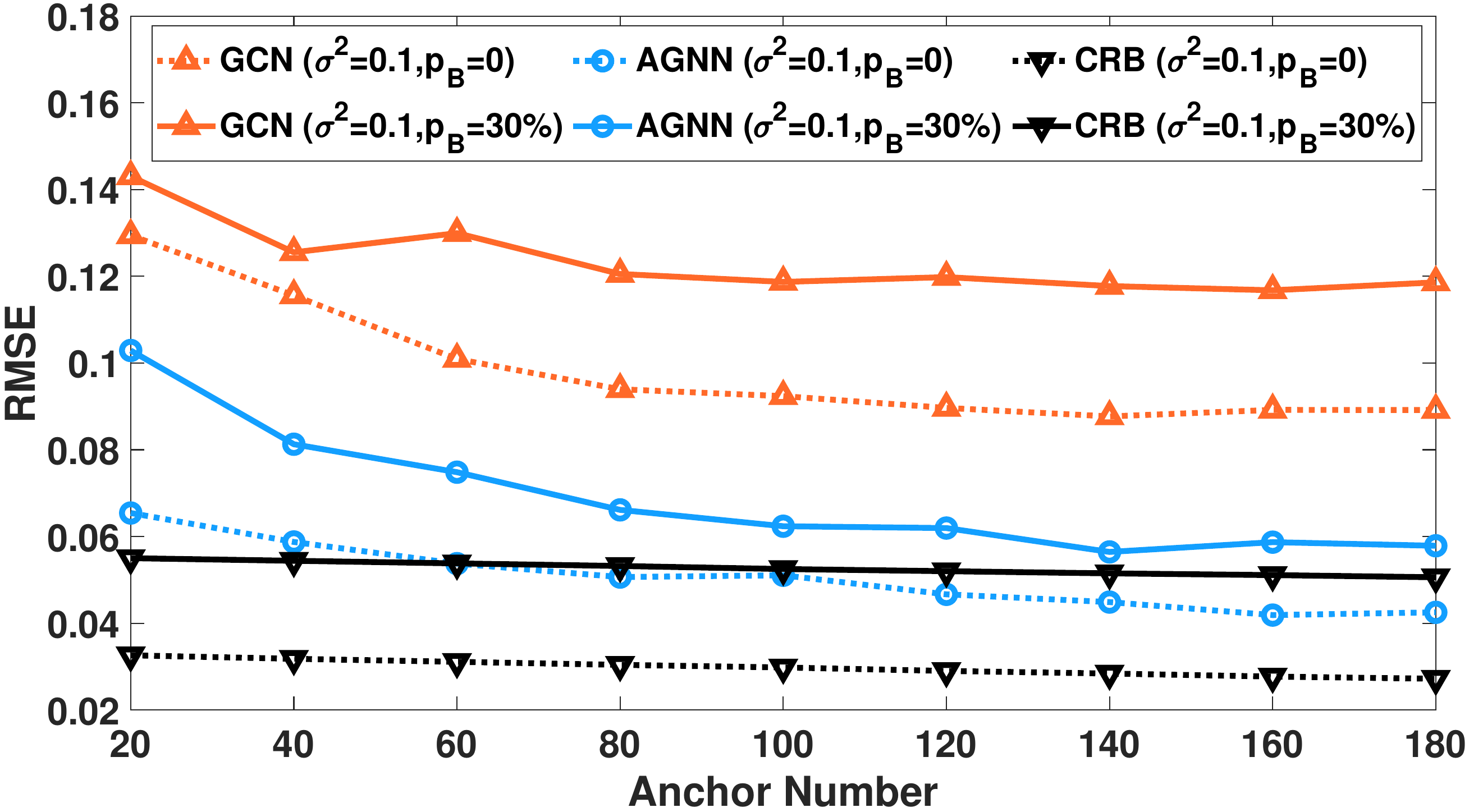}
	\caption{The averaged loss (RMSE) v.s. the number of anchors under different noise conditions.}
	\label{fig:anchor}
\end{figure}

\vspace{4pt}
\noindent
\textit{- Localization Accuracy Analysis.}
Subsequently, we focus on the performance of our proposed GNN-based methods. We explore localization error by varying $N_l$ from $20$ to $180$ with a stepsize of $20$ under two distinct noise conditions, and present the results in Fig.~\ref{fig:anchor}. 
Three key observations can be drawn from these results. 
1) The AGNN consistently achieves significantly lower RMSE for all $N_l$ values compared to the GCN-based method. 
Notably, even under a 30\% NLOS noise condition, AGNN consistently outperforms GCN in the scenarios influenced solely by LOS noise. This underscores the robustness of AGNN to NLOS noise, attributed to learned thresholds and aggregation weights through the attention mechanism.
2) As $N_l$ increases, both AGNN and GCN progressively approach their respective performance limits, indicating the potential of GNN-based methods, employing semi-supervised learning, to achieve accurate localization in massive networks with a limited number of anchors.
3) AGNN's performance limits exhibit a marginal gap compared to the CRB, particularly in NLOS noise conditions. 
This gap arises as the CRB represents the optimal scenario among unbiased methods, assuming complete knowledge of noise distributions and full utilization of node distance information. In contrast, AGNN, while mitigating NLOS noise during the truncation of the measurement matrix $\bbX$, excludes certain distance information, resulting in the observed gap. The reduction of the gap in NLOS scenarios, attributed to AGNN's noise truncation effect, provides additional validation for this observation. Consequently, these findings underscore AGNN's suitability for scenarios influenced by NLOS noise.

\begin{figure}[t] 
	\centering
	\includegraphics[width=1\linewidth]{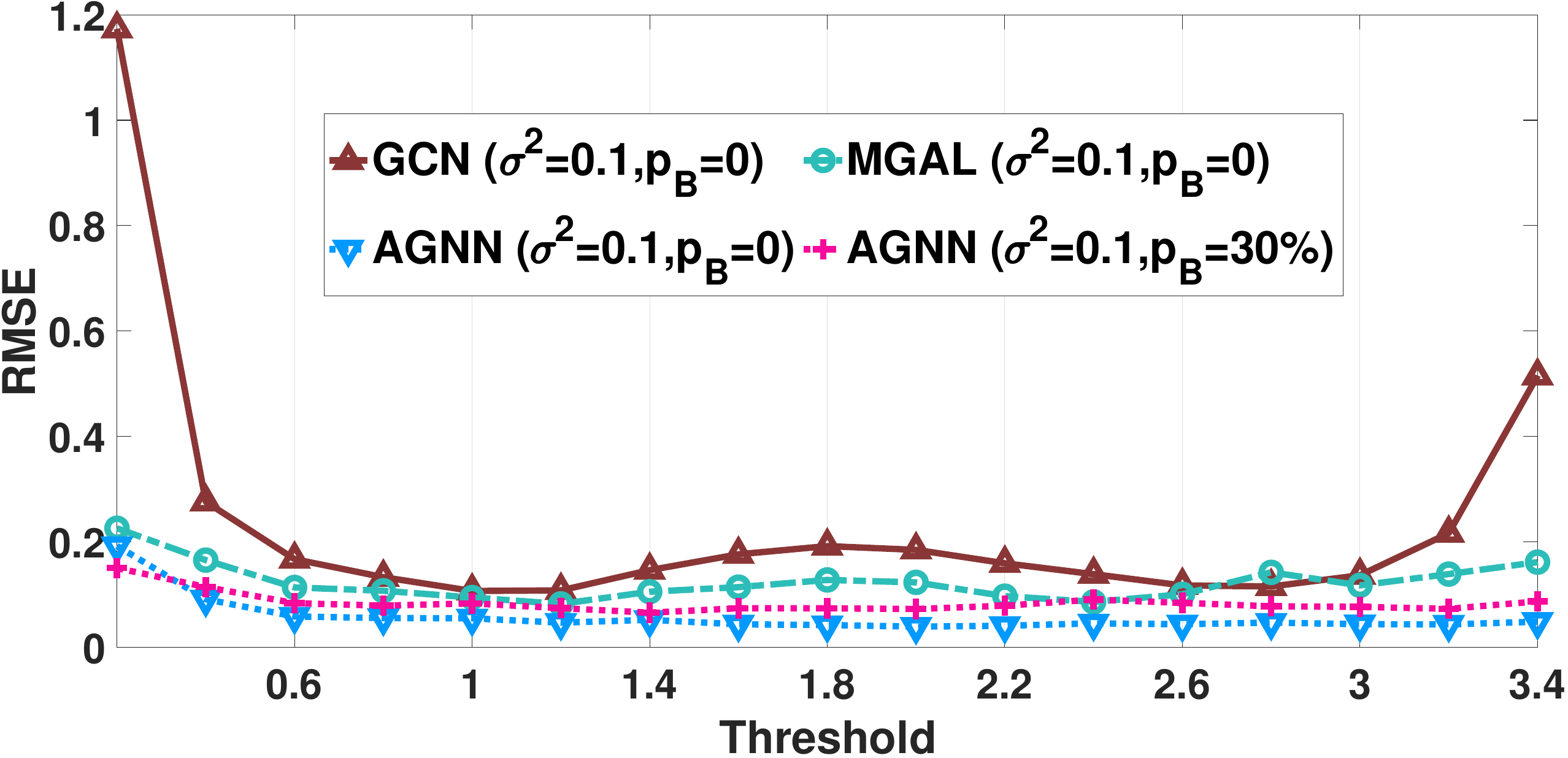}
	\caption{The averaged loss (RMSE) v.s. the threshold ($T_h$ or $T_h^0$) under different noise conditions.}
	\label{fig:threshold_new}
\end{figure}

\begin{table}[t]
\footnotesize
    \centering
        \renewcommand{\arraystretch}{1.2}
    \begin{tabular}{c|c|ccc}
        \hline
        \multicolumn{2}{c|}{{Train: $\ccalD^1$, Test: $\ccalD^1$}} & \multicolumn{3}{c}{{Train: $\ccalD^1$, Test: $\{\ccalD^2,\ccalD^3,\ccalD^4\}$}} \\ \hline
        {Dataset} & {$\ccalD^1$} & {$\ccalD^2$} & {$\ccalD^3$} & {$\ccalD^4$} \\ 
        {Noise} & (0.25, 10\%) & (0.25, 10\%) & (0.25, 10\%) & (0.25, 30\%) \\ 
        {Size} & 5m x 5m & 5m x 5m & 3m x 3m & 5m x 5m \\ \hline
        {GCN} & 0.1006 & 0.1273 & 0.2448 & 0.3689 \\ 
        {AGNN} & 0.0638 & 0.0676 & 0.0783 & 0.2181 \\ \hline
    \end{tabular}
    \caption{{Performance of GCN and AGNN on datasets with varying noise levels, scene sizes, and network realizations}}
    \label{tab: performance_across_dataset}
\end{table}

\vspace{4pt}
\noindent
\textit{- Robustness to Threshold Variation.}
To comprehensively investigate the impact of the threshold, $T_h$, we conduct an experiment systematically exploring the RMSE of GNN-based models across a range of $T_h$, as depicted in Fig.~\ref{fig:threshold_new}. Here, ``MGAL'' denotes a model constructed solely using two graph attention layers from Sec.~\ref{sec: graph_attention_layer}, and subject to a simple adjacency matrix determined by a manually-set threshold. 
For AGNN, we vary the initial threshold $T_{h}^{0}$ employed for the coarse-grained neighbor selection, see Sec.~\ref{sec: attentional_neighbor_selection}.
Three key observations can be drawn from these results. 
1) Both MGAL and AGNN constantly achieve lower RMSE values compared to the GCN-based method utilizing a constant aggregation weight. This underscores the effectiveness of the learned aggregation weights through the attention mechanism in Sec.~\ref{sec: graph_attention_layer}.
2) Akin to MGAL, AGNN exhibits initially low localization error. As $T_h$ increases, MGAL's localization error experiences fluctuations, while AGNN's error decreases rapidly and maintains consistently at a low level. This indicates that for AGNN, the graph attention layer plays a crucial role in ensuring localization accuracy when the initial threshold $T_h^0$ is set to a small value. While $T_h^0$ is relatively large, AGNN resorts to the proposed ALM, which allows for further refinement of the coarse-grained neighbor sets, thereby enhancing the accuracy of localization.
3) AGNN demonstrates consistently low localization errors in two distinct noise environments across various values for $T_h^0$, suggesting the versatility and robustness of the initial threshold choice under diverse noise conditions.

\begin{table}[t]
	\setlength\tabcolsep{3pt}
	% \scriptsize
	\footnotesize
	% \small
	\centering
	\caption{The computational time (in seconds) of different methods with $(\sigma^2=0.1, p_B=30\%)$ and $N_l = 50$.}
	\label{tab:3}       % Give a unique label
	\renewcommand{\arraystretch}{1.3}
	\begin{tabular}{c|cccccccc}
		\hline
		 $N$ & GCN  & 	AGNN  & MLP& {SMILE} & LS & MDS & ECM & SDP   \\

		\hline
		500 & 3.24  & 5.23  & 2.33 & {5.78} & 32.47 & 22.64 & 82.85 & 1587 \\
		1000 & 5.82 & 14.66 & 3.94 & {8.48} & 89.92 & 100.74 & 353.4 & N.A.  \\
		10000 & 707 & 1872 & 212 & {739} & N.A. & N.A. & N.A. & N.A. \\
		\hline
	\end{tabular}
\end{table}

\begin{figure}[t] 
	\centering
	\includegraphics[width=1\linewidth]{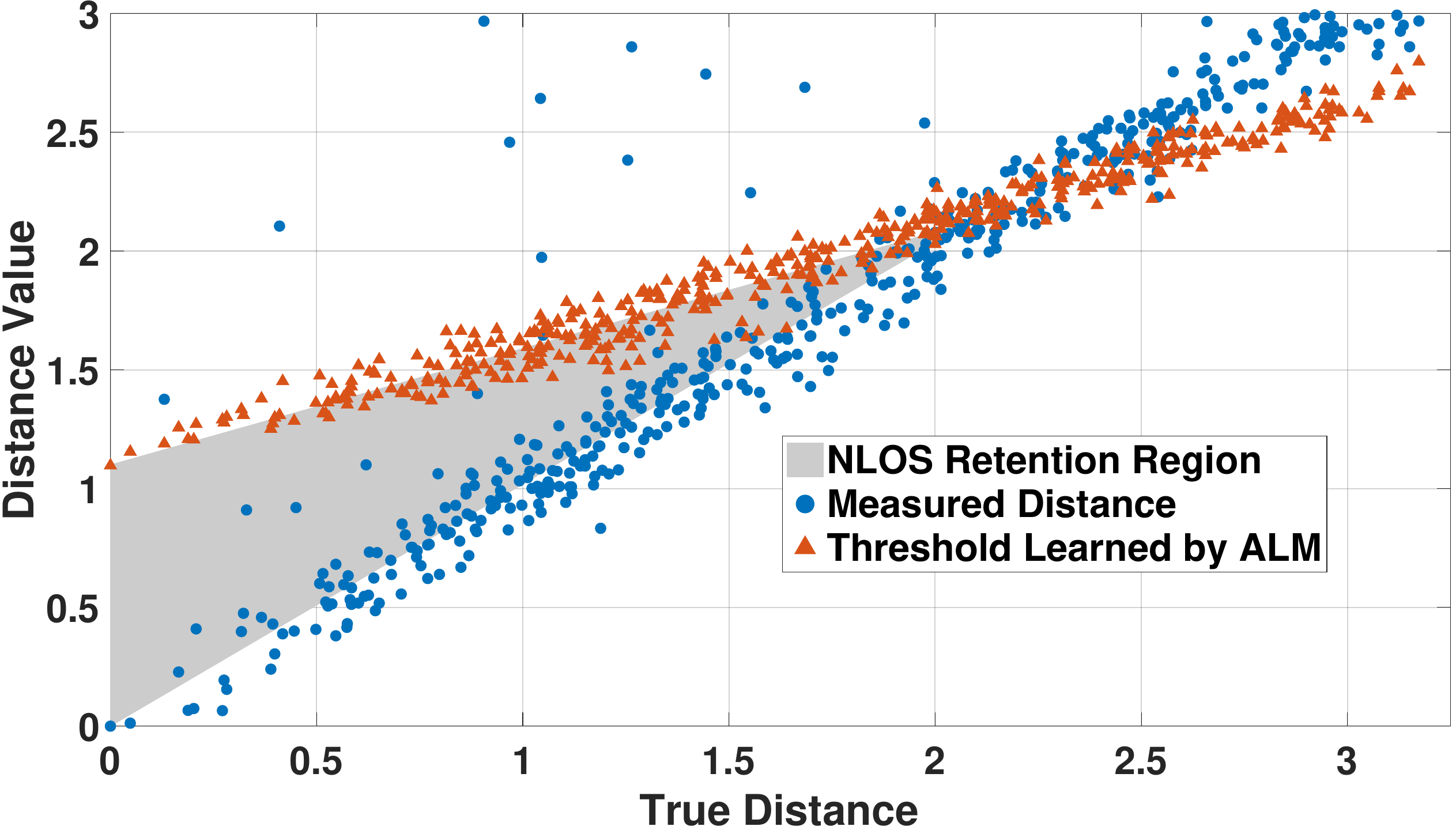}
	\caption{The threshold learned by ALM versus the measured distance for the $10$-th node in $(0.25,30\%)$ dataset with $T_h^0=3.0$.}
	\label{fig: ALMatt_VS_MearDist}
\end{figure}

\begin{figure*}[t] 
	\centering
	\includegraphics[width=0.9\linewidth]{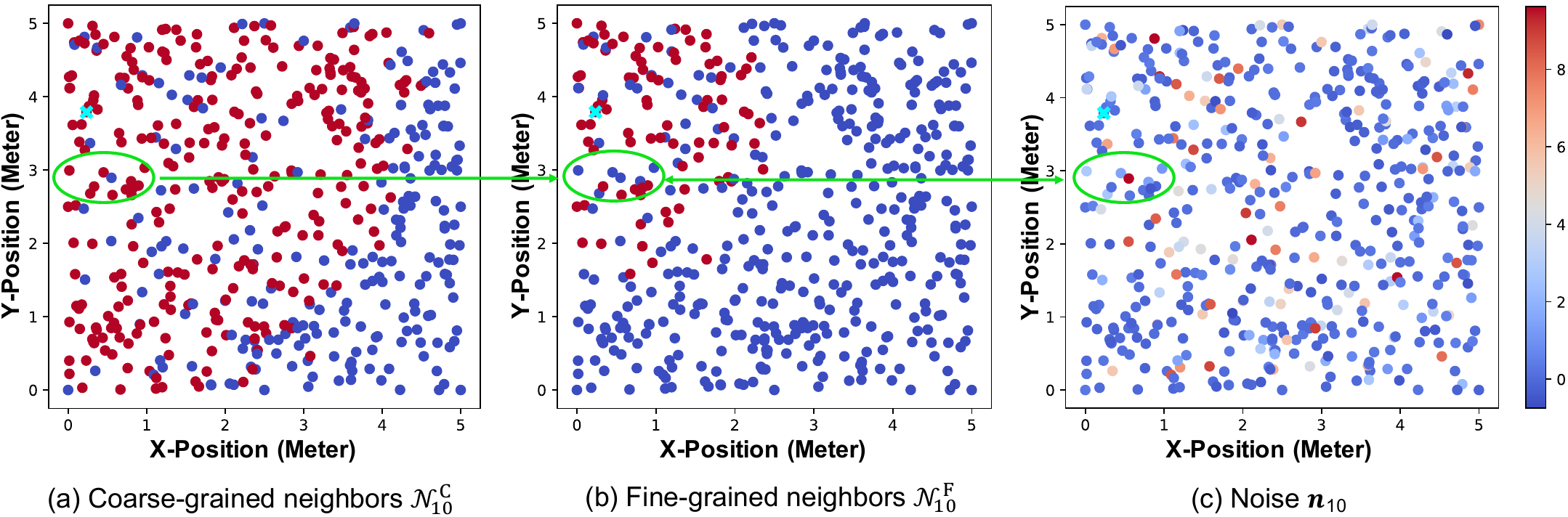}
	\caption{The different types of scatter plots for the $10$-th node, denoted by a cyan cross, in $(0.25,30\%)$ dataset with $T_h^0=4.0$.}
	\label{fig:adj_AdjAGNNII_Noise}
    \hrulefill
\end{figure*}

\setcounter{figure}{9} 
\begin{figure}[t] 
	\centering
	\includegraphics[width=1\linewidth]{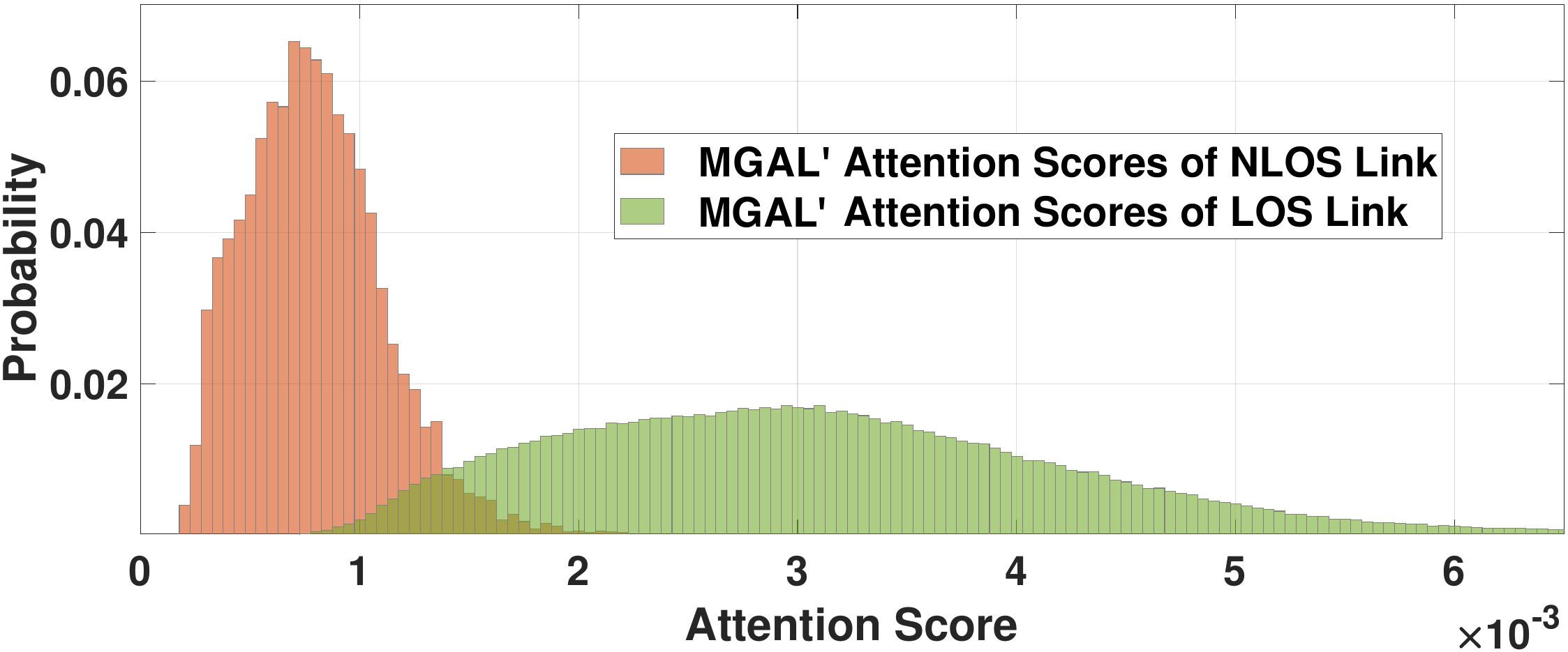}
	\caption{Distributions of MGAL' attention scores for LOS link and NLOS link in $(0.25,30\%)$ dataset.}
	\label{fig: AttScore_VS_Noise}
\end{figure}

\vspace{4pt}
\noindent
{\textit{- Model Generalization.} To demonstrate the generalization and transferability of our GNN-based methods, we evaluate the performance of GCN and AGNN on previously unseen datasets $\{\ccalD^2,\ccalD^3,\ccalD^4\}$ using models trained on $\ccalD^1$, as summarized in Tab.~\ref{tab: performance_across_dataset}. 
Datasets $\ccalD^1$ and $\ccalD^2$ share the same noise distribution and scene size but represent different realizations of the same setting. Both methods exhibit slight increases in localization error when tested on $\ccalD^2$, demonstrating strong robustness to variations in network realizations. In the $\ccalD^3$ scenario, where the scene size changes to $3 \text{m} \times 3 \text{m}$, GCN suffers a notable accuracy decline due to its reliance on fixed thresholds, which fail to adapt to the altered environment. In contrast, AGNN maintains stable performance with only a minor increase in error, as its attention mechanism dynamically recalculates an optimal threshold matrix $\bbT^A$, effectively adapting to the new scene size. 
Finally, in the more challenging $\ccalD^4$ scenario, characterized by a significant increase in NLOS occurrence probability, both methods experience performance degradation. However, AGNN exhibits a smaller decline compared to GCN, highlighting its superior adaptability and robustness across diverse and noisy environments.}

\vspace{4pt}
\noindent
\textit{- Computational Efficiency.}
Beyond accuracy and robustness, real-world applications demand prompt responses. Table~\ref{tab:3} presents the computational times for various methods. Notably, the proposed GNN-based methods exhibit significantly reduced training times compared to traditional model-driven statistical methods. Particularly, for a network with 500 nodes, SMILE, MLP, GCN, and AGNN only require a few seconds, whereas model-driven methods incur substantially higher computational costs, reaching up to 1587s for SDP using commercial software.
Furthermore, it's noteworthy that the computational time for the proposed GNN-based methods only exhibits a modest increase when doubling the number of nodes in the network. This scalability attribute highlights the efficiency of the proposed methods in large-scale scenarios. Especially, for massive scale networks, the proposed GNN-based methods exhibit a superior capability compared to traditional methods, exemplified by the case of $N=10000$ where the GNN-based methods remain computationally affordable, while the traditional methods become impractical at such a scale. These results collectively suggest that the proposed GNN-based methods not only achieve high accuracy but also offer a prospective solution for large-scale network localization with efficient computational performance, making them suitable for real-world deployment.

\subsection{Insights into Well-performed AGNN}

\vspace{4pt}
\noindent
\textit{- NLOS Truncation by ALM.}
To delve deeper into the impact of the threshold matrix $\mathbf{T}^A$ learned by the attention mechanism in our designed ALM, we conduct a scatter plot analysis, as depicted in Fig.~\ref{fig: ALMatt_VS_MearDist}. {This plot compares the learned thresholds for the $10$-th node, $\bbt^A_{[10,:]}$, depicted by orange triangles, with the measured distances, $\bbx_{[10,:]}$, represented by blue circles.
More examples of other nodes can be found in the App.~\ref{app: additional_experiments}. 
The distribution of blue circles reveals that some points deviate significantly from the main trend due to the NLOS effect, which causes the measured distances to be much greater than the true distances. 
In contrast, the distribution of orange triangles leads to the following observations. 
1) The trend of the orange triangles confirms that the attention mechanism effectively learns a distance-related threshold.
2) The filtered-out neighbors (blue circles above the thresholds) fall into two categories: those with short distances but significant NLOS noise (blue circles deviating from the main trend) and those with larger distances.  
3) After applying the distance-related thresholds, the resulting fine-grained neighbor set consists of:  a) Neighbors with short distances and LOS noise (blue circles along the main trend), contributing high-quality information for localization.  
b) Neighbors with short distances and relatively low NLOS noise (blue circles below the thresholds but slightly deviating from the main trend), represented by the gray region in Fig.~\ref{fig: ALMatt_VS_MearDist}. This set will be further refined by MGAL.}

\setcounter{figure}{10} 
\begin{figure*}[t] 
	\centering
	\includegraphics[width=0.9\linewidth]{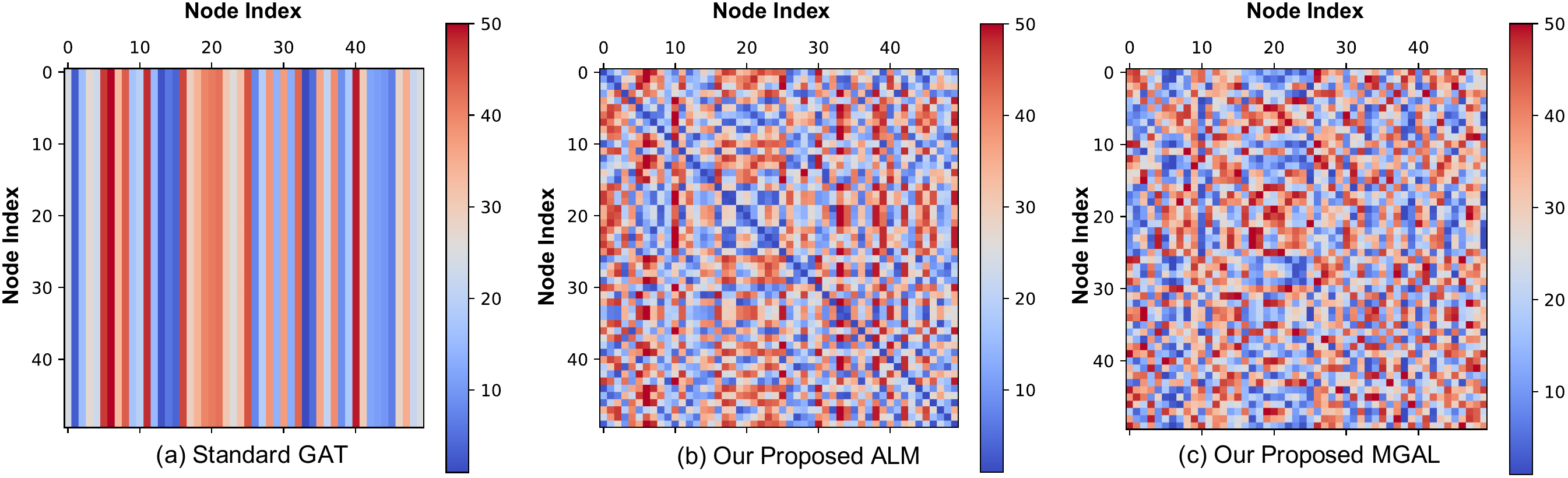}
	\caption{Three heatmaps for attention score rankings using different methods in $(0.25,30\%)$ dataset with $T_h (\mathrm{or}\;T_h^0)=3.0$.}
	\label{fig:Attention_Comparison}
    \hrulefill
\end{figure*}

\vspace{4pt}
\noindent
\textit{- Visualization of ALM's Effect.}
{We further visualize the neighbor selection process within ALM for the $10$-th node using scatter plots and a heatmap. Fig.\ref{fig:adj_AdjAGNNII_Noise} (a) and (b) illustrate the coarse-grained neighbor set $\ccalN_{10}^C$ and the fine-grained neighbor set  $\ccalN_{10}^F$, respectively, with the neighbors highlighted in red and the non-neighbors shown in blue. Fig.\ref{fig:adj_AdjAGNNII_Noise} (c) presents a heatmap of additive noise $\bbn_{10}$, between the $10$-th node and all the others, with varying colors indicating different noise levels.
A comparison of Fig.\ref{fig:adj_AdjAGNNII_Noise} (a) and (b) reveals that the neighbors in $\ccalN_{10}^C$ but excluded from $\ccalN_{10}^F$ primarily fall into two categories: those at relatively large distances from the  $10$-th node, and those close to it but suffering from relatively high noise levels. 
Notably, the cyan circle in Fig.\ref{fig:adj_AdjAGNNII_Noise} (a) highlights close neighbors, some of which are initially included in $\ccalN_{10}^C$ (in red) but are excluded from $\ccalN_{10}^F$ (in blue) in Fig.~\ref{fig:adj_AdjAGNNII_Noise} (b).
Examining the corresponding green region in Fig.~\ref{fig:adj_AdjAGNNII_Noise} (c), we observe that the excluded nodes exhibit higher noise levels (3.0-4.5 meters), confirming  the effectiveness of ALM's attention mechanism in refining $\ccalN_{10}^C$ by removing high-noise neighbors. This refinement reduces noise contamination, enhancing the quality of $\ccalN_{10}^F$ for further processing in MGAL.}

\vspace{4pt}
\noindent
\textit{- NLOS Filtering by MGAL.}
After truncating the measured distance using the learned threshold, the resulting truncated measurement matrix $\hat{\mathbf{X}}$ still retains a portion of NLOS noise, as depicted by the gray region in Fig.~\ref{fig: ALMatt_VS_MearDist}. 
To examine the effectiveness of MGAL' attention mechanism in mitigating the remaining NLOS noise, we present the distributions of attention scores for both LOS and NLOS links in Fig.~\ref{fig: AttScore_VS_Noise}.
Primarily, it is evident that MGAL assign attention scores to NLOS and LOS links to distinct distributions, thus demonstrating the ability of the attention mechanism in MGAL to distinguish between NLOS and LOS links. Furthermore, the distribution of attention scores for NLOS links exhibits a smaller mean and variance compared to those of LOS links, indicating that NLOS links tend to receive lower attention scores, while LOS links exhibit relatively larger attention scores with a more flexible range of values. Consequently, in the feature propagation step (Eq.~\eqref{eq: dynamic_GAT_eqnatt}), MGAL assign smaller aggregation weights to NLOS links while prioritizing LOS links, which further filter out NLOS noise to some degree.

\vspace{4pt}
\noindent
\textit{- Validation of Dynamic Attention Property.}
{As detailed in Sec.~\ref{dynamic_att_property}, the dynamic attention property has been demonstrated for both the proposed ALM and MGAL methods. To validate this, we compute the attention scores between nodes with indices $\{1,\cdots,50\}$ using the well-trained parameters from GAT, ALM, and MGAL models. This process produces an attention score matrix denoted as $\bbE \in \mathbb{R}^{50 \times 50}$. For each row of the attention scores $\bbe_{[i,:]}$, we individually rank the scores, yielding three distinct patterns of attention score rankings corresponding to GAT, ALM, and MGAL, as illustrated in Fig.~\ref{fig:Attention_Comparison}.  
Specifically, Fig.~\ref{fig:Attention_Comparison}(a) depicts the results for the standard GAT. Here, the attention scores in each row maintain a consistent rank, reflecting a fixed order in assigning weights to the other nodes and indicating a static attention property.  	
In contrast, Fig.~\ref{fig:Attention_Comparison}(b) and (c) present the results for ALM and MGAL, respectively. In these modules, the attention scores exhibit varying ranks within each row, validating the dynamic attention property inherent to both ALM and MGAL.  
Additionally, a comparison of Fig.~\ref{fig:Attention_Comparison}(b) and (c) reveals that ALM’s attention scores exhibit symmetry, while those of MGAL do not. This distinction arises from the distance-aware attention mechanism employed in ALM, as defined in Eq.~\eqref{eq: neighbor_selection_att}, which imposes a symmetric structure on the learned attention scores.}

%--------------------------------------------------------------------------
% ----------------------------Conclusion------------------------------------
%--------------------------------------------------------------------------
\section{Conclusion}
\label{sec: conclusion}
This paper investigated the robust cooperative localization of massive wireless networks in mixed LOS/NLOS environments. To address this challenging problem, we leveraged cutting-edge GNNs augmented with tailored attention mechanisms. Specifically, we proposed a novel AGNN model capable of autonomously learning the underlying graph structure of the network and the aggregation weight of each node. Consequently, our AGNN model outperforms our previous vanilla GCN-based method that employed predefined graph structures. Further theoretical analyses demonstrated that the proposed AGNN model exhibits dynamic attention properties and affordable computational complexity. Numerical results underscored the superiority of the class of GNN-based network localization methods in terms of localization accuracy, robustness, and computational efficiency, rendering them highly suitable for future large-scale networks in complex settings. The AGNN model, in particular, is explored by detailed dissection experiments to demonstrate significant advancements, establishing its potential for massive network applications in challenging environments.

\ifCLASSOPTIONcaptionsoff
\newpage
\fi

\bibliographystyle{IEEEtran}
\bibliography{Chapters/Ref.bib}
%
% \newpage
% \hfill
% \newpage

\onecolumn

\section{Appendix}

\subsection{The Optimization Method for $T_h^0$ Selection}

\label{sec: thre_learning_method}

Originally, ALM employs a manually predetermined threshold, denoted as $T_h^0$, uniformly applied across all nodes to establish a coarse-grained neighbor set. 
To address the issue related to the predetermined threshold, we introduce an optimization method capable of autonomously learning an individual threshold for each node. 
More precisely, the optimization method transforms the uniform threshold into a trainable vector (a scalar or a matrix case is analogous), with each element representing the threshold value for a specific node. This vector is randomly initialized and it can be updated and optimized through the backpropagation process. The specific approach is outlined as follows.

The trainable threshold vector is represented as $\bbt^v\in\mathbb{R}^{N}$. Since $\mathbf{t}^v$ serves as a trainable vector, it is imperative to ensure that each element within this vector is bounded within the interval $[0, l_\mathrm{max}]$, where $l_\mathrm{max}$ corresponds to the maximum distance over which a node in the network can receive signals. To enforce this constraint on $\mathbf{t}^v$ during the training process, we apply the following rescaling operation:
\begin{equation}
	\label{eq: threshold_transform}
	\hat{\bbt}^v =l_\mathrm{max}\cdot\mathrm{Sigmoid}(\bbt^v).
\end{equation}

Subsequently, it is natural to construct adjacency matrix $\mathbf{A}\in\mathbb{R}^{N \times N}$ as similar to Eq.~\eqref{eq:threshold} that
\begin{equation}\label{eq:thre_adj}
	a_{ij}=\begin{cases}
		0,\quad \mathrm{if} \quad  x_{ij}>\hat{t}^v_i, \\
		1,\quad \mathrm{otherwise},
	\end{cases}
\end{equation}
and further construct the truncated measurement matrix as $\hat{\mathbf{X}}=\mathbf{A} \odot \mathbf{X}$. 

Based on the approximated step function which is elucidated in Eq.~\eqref{eq: Approx_step_Fun},  Eq.~\eqref{eq:thre_adj} and relative expression of $\hat{x}_{ij}$ can be expressed as follows:
\begin{align}
	\label{eq: adj_transform}
	a_{ij} &= \mathrm{ReLU}(-\mathrm{tanh}(\gamma (x_{ij}-\hat{t}^v_i )))\\
	\label{eq: x_transform}
	\hat{x}_{ij} &= x_{ij}\cdot \mathrm{ReLU}(-\mathrm{tanh}(\gamma(x_{ij}-\hat{t}^v_i ))).
\end{align}
Once the learnable adjacency matrix and truncated measurement matrix are obtained via the optimization method, they can be effectively employed as inputs for MGAL. 

The optimization problem for this procedure can be formulated as follows:
\setlength{\jot}{1pt}
\begin{equation}
\begin{aligned}
	\arg\min_{\bbW, \bbt^v}~~~~~ \mathcal{L} &= \|\mathbf{P}_l-\hat{\mathbf{P}}_l \|^2_F\\[2\jot]
	\mathrm{s.t.} ~~~~~ \hat{\bbP} &= \text{MGAL}_{\bbW}(\bbA,\hat{\bbX})\\[\jot]
	a_{ij} &= \mathrm{ReLU}(-\mathrm{tanh}(\gamma(x_{ij}-\hat{t}^v_i )))\\[\jot]
	\hat{x}_{ij} &= x_{ij}\cdot \mathrm{ReLU}(-\mathrm{tanh}(\gamma(x_{ij}-\hat{t}^v_i )))\\[\jot]
	\hbt^v &=l_\mathrm{max}\cdot\mathrm{Sigmoid}(\bbt^v).
\end{aligned}
\end{equation}
Herein, $\bbW$ represents all trainable matrices in MGAL. 
Upon obtaining the optimized initial threshold vector, and its associated coarse-grained neighbor set, this set is employed in the \textit{fine-grained neighbor refinement} stage within the ALM. Subsequently, the application of MGAL ensues, ultimately leading to the attainment of the final localization.

\subsection{Proof of Theorem \ref{theorem: static_GAT}}
\label{sec: proof: static_GAT}

Let $\mathcal{G}=\left(\mathcal{V},\mathcal{E}\right)$ be a graph modeled by 
a GAT layer with 
some $\bbv$ and $\bbW$ values, and having node representations $\{\bbh_{[i,:]},..., \bbh_{[N,:]}\}$.
The learned parameter $\bbv$ can be written as a concatenation $\bbv=\left[\bbv_1 \| \bbv_2\right]\in\mathbb{R}^{2D_k}$ such that $ \bbv_1,\bbv_2\in\mathbb{R}^{D_k}$, and GAT can be re-written as:
\begin{equation}
	\alpha_{ij}= e\left(\bbh_{[i,:]}, \bbh_{[j,:]}\right)=
	\phi
	\left(
	\bbh_{[i,:]}\bbW \bbv_1+\bbh_{[j,:]}\bbW \bbv_2
	\right).
	\label{eq:gat-re}
\end{equation}
Since $\mathcal{V}$ is finite, 
there exists a node $j_{max}\in\mathcal{V}$ such that $\bbh_{[{j_{max},:]}}\bbW \bbv_2$ is maximal among all nodes $j\in\mathcal{V}$\polish{ ($j_{max}$ is the $j_f$ required by static)}{}.
Due to the monotonicity of $\phi(\cdot)$ and $\mathrm{softmax}$, for every query node $i \in\mathcal{V}$, the node $j_{max}$ also leads to the maximal value of its attention distribution $\{\alpha_{ij} \mid j \in \mathcal{V}\}$.
Thus, it computes only static attention.

\subsection{Proof of Theorem \ref{thm: dynamic_attention}}
\label{proof: dynamic_attention}

Let's consider a graph $\mathcal{G} = (\mathcal{V}, \mathcal{E})$. This graph comprises transformed node representations, denoted as $\{\bbh_{[i,:]}\bbW, \ldots, \bbh_{[N,:]}\bbW\}$. Suppose $\varphi: [N] \to [N]$ is any node mapping.

We introduce a function $g: \mathbb{R}^{2D_k} \to \mathbb{R}$, defined as follows:
\begin{equation}
    g\left(\bbx\right) = 
    \begin{cases}
        1 & \textrm{when }\bbx=\left[\bbh_{[i,:]}\bbW \|\bbh_{[\varphi\left(i\right),:]}\bbW\right] \\
        0 & \textrm{otherwise}
    \end{cases}
\end{equation}
Further, we define a continuous function $\widetilde{g}: \mathbb{R}^{2D_k} \to \mathbb{R}$, which matches $g$ at precisely $N^2$ specific inputs: $\widetilde{g}(\bbx)=g(\bbx)$, when $\bbx=\left[\bbh_{[i,:]}\bbW  \| \bbh_{[j,:]}\bbW \right], \forall i,j\in [N]$. For all other inputs $\bbx \in \mathbb{R}^{2D_k}$, $\widetilde{g}(\bbx)$ assumes any values that preserve its continuity.

Note that $\widetilde{g}$ is formulated for ease of proof. As our goal is for the graph attention layer's scoring function $e$ in Eq.~(\ref{eq: dynamic_GAT_att}, \ref{eq: feature_trans}) to approximate the mapping $\varphi$ at a finite set of points, we need $e$ to approximate $g$ at specific points. Although $g$ is discontinuous, $\widetilde{g}$ is continuous, facilitating the use of the universal approximation theorem. By approximating $\widetilde{g}$, $e$ effectively approximates $g$ at our specified points. We only require $\widetilde{g}$ to match $g$ at $N^2$ specific points ${ [\bbh_{[i,:]} | \bbh_{[j,:]}] : i, j \in [N]}$. Beyond these points, the $\widetilde{g}$'s values are flexible, as long as its continuity is maintained.

Therefore, for every node $i \in \mathcal{V}$ and every $j_{\neq \varphi(i)} \in \mathcal{V}$, we have:
\begin{equation}
    \underbrace{\widetilde{g}\left(\left[\bbh_{[i,:]} \bbW\|\bbh_{[j,:]} \bbW\right]\right)}_{=0} < \underbrace{\widetilde{g}\left(\left[\bbh_{[i,:]} \bbW\|\bbh_{[{\varphi\left(i\right)},:]}\bbW\right]\right)}_{=1}.
\end{equation}
If we concatenate the input vectors and define the graph attention layer's scoring function $e$ based on the concatenated vector $[\bbh_{[i,:]} \bbW | \bbh_{[j,:]} \bbW]$, then according to the universal approximation theorem, $e$ can approximate $\widetilde{g}$ for any compact subset of $\mathbb{R}^{2D_k}$.

Hence, for any sufficiently small $\epsilon$ (where $0 < \epsilon < \frac{1}{2}$), there exist parameters $\bbW_{att}$ and $\bbv_{att}$ such that for every node $i \in \mathcal{V}$ and every $j_{\neq \varphi(i)} \in \mathcal{V}$:
\begin{equation}
    \underbrace{e\left(\bbh_{[i,:]} \bbW,\bbh_{[j,:]} \bbW\right)}_{< 0 + \epsilon} < \underbrace{e\left(\bbh_{[i,:]} \bbW,\bbh_{[{\varphi\left(i\right)},:]}\bbW\right)}_{1 - \epsilon <}.
\end{equation}
Owing to the increasing monotonicity of the $\mathrm{softmax}$ function, this implies: 
\begin{equation}
    \alpha_{i,j} < \alpha_{i,\varphi\left(i\right)}.
\end{equation}
This shows that our proposed graph attention layer possesses dynamic attention, as it assigns the highest attention score to any neighbor node depending on the concatenated feature vector.

\subsection{Proof of Theorem \ref{thm: dynamic_attention_ALM-II}}
\label{proof: dynamic_attention_ALM-II}
The expression for the attention coefficient in ALM, as defined in Eq.~\eqref{eq: neighbor_selection_att}, can be reformulated as follows:
\begin{align}
	\label{eq: ALM-II_att_div}
	e_{ij}^A = \sum_{k=1}^{F_A} u_{ij}^k, \, \forall j \in \mathcal{N}_{i}^{C},
\end{align}
where $u_{ij}^k = \left| \phi \left( \bbx_{[i,:]}[\bbW_A]_k \right) -   \phi\left(\bbx_{[j,:]}[\bbW_A]_k \right) \right| \cdot[\bbv_A]_k$. Here, the notation $[\cdot]_k$ represents the $k$-th column vector or the $k$-th element of a matrix or vector.

Based on the decomposition of the summation in Eq.~\eqref{eq: ALM-II_att_div}, we focus on the $k$-th term, denoted as $u_{ij}^k$. Let $c_j^k:=\phi \left( \bbx_{[j,:]}[\bbW_A]_k\right)$, and then sort $c_j^k,\forall j\in \ccalN_i^C$ in ascending order to obtain the corresponding indices of the node sorting, denoted as $\left\lbrace j_{min},\cdots,j_{l_1}, j, j_{r_1},\cdots,j_{max}\right\rbrace$. Assuming the scalar $[\bbv_A]_k$ is positive, then for every $j\in\ccalN^C_i$, we have:
\begin{equation}
    u_{ij}^k \le \max\left\lbrace \left| c_i^k - c_{j_{max}}^k\right|, \left| c_i^k - c_{j_{min}}^k\right|\right\rbrace
\end{equation}
If the scalar $[\bbv_A]_k$ is negative, then for every $j\in\ccalN^C_i$, we have:
\begin{equation}
    u_{ij}^k \le \max\left\lbrace \left| c_i^k - c_{j_{l_1}}^k\right|, \left| c_i^k - c_{j_{r_1}}^k\right|\right\rbrace
\end{equation}

It is evident that the maximum value of $u_{ij}^k$ is not fixed but rather depends on $\ccalN^C$ and $[\bbv_A]_k$. Thus, $u_{ij}^k$, as the attention coefficient in the summation of $e_{ij}^A$ for the $k$-th term, exhibits a dynamic attention property.

Similar conclusions can be drawn for the remaining $F_A-1$ terms, wherein the difference arises due to the varying orders of $c_j^k,\forall j\in \ccalN_i^C$ determined by the specific values of $[\bbW_A]_k$ and $[\bbv_A]_k$, which further ensures the dynamic attention property.

\subsection{Proof of Theorem \ref{thm: gcn_complexity}}
\label{proof: gcn_complexity}
The complexity analysis of the $k$-th graph convolutional layer can be segmented into two components: Eq.~\eqref{eq:update_matrix} and Eq.~\eqref{eq:gcn_propagation}.

First, we consider the operation described in Eq.~\eqref{eq:update_matrix}, which entails a matrix product involving a sparse adjacency matrix and a dense representation matrix. Specifically, the $i$-th row of $\bar{\mathbf{H}}^{(k)}$ can be expressed as follows:
\begin{equation}
	\begin{aligned}
		\barbh_{[i,:]} = \hba_{[i,:]}\bbH^{(k-1)} 
		= \sum_{j=1}^{N} \hhata_{ij}\bbh_{[j,:]}^{(k-1)}
		= \sum_{j\in\ccalN_i} \hhata_{ij}\bbh_{[j,:]}^{(k-1)}.
	\end{aligned}
\end{equation}
Here, the computation of $\barbh_{[i,:]}$ demands $O(|\ccalN_i|D_{k-1})$ in terms of time complexity.
Given that $2|E| = \sum_{i=1}^N|\ccalN_i|$, the overall time complexity for Eq.~\eqref{eq:update_matrix} becomes $O(|E|D_{k-1})$.

Secondly, for the matrix multiplication between two dense matrices, as seen in Eq.~\eqref{eq:gcn_propagation}, the well-known time complexity is $O(ND_{k-1}D_k)$.

In summary, the time complexity of the $k$-th graph convolutional layer can be succinctly expressed as $O(ND_{k-1}D_k+|E|D_{k-1})$.

\subsection{Proof of Theorem \ref{thm: agnn_complexity}}
\label{proof: agnn_complexity}
The proof of Theorem \ref{thm: agnn_complexity} comprises two distinct parts, pertaining to the time complexity analysis of ALM and the $k$-th graph attention layer within the MGALs.

The primary computational complexity within ALM arises from the calculation of attention coefficients, as articulated in Eq.~\eqref{eq: neighbor_selection_att}. Firstly, we compute $\bbx_{[i,:]}\bbW_A$ for every $i\in[N]$, which necessitates $O(NNF_A)$. Subsequently, for each edge $(i,j)$, we compute $\left| \phi\left( \bbx_{[i,:]}\bbW_A \right) - \phi\left(\bbx_{[j,:]}\bbW_A \right) \right|$ using the pre-computed $\bbx_{[i,:]}\bbW_A$ and $\bbx_{[j,:]}\bbW_A$, incurring a time complexity of $O(|E^C|F_A)$. Finally, computing the results of the linear layer $\bbv_A$ adds an additional $O(|E^C|F_A)$. In summary, the time complexity of ALM amounts to $O(NNF_A+|E^C|F_A)$.

The computational complexity of the $k$-th graph attention layer within MGALs can be broken down into several components. Initially, we compute Eq.~\eqref{eq: feature_trans} for every $i\in[N]$, demanding $O(ND_{k-1}D_k)$. Subsequently, we calculate ${\hbh}_{[i,:]}^{(k-1)}\bbW_{att}^{(k-1)}$ for every $i\in[N]$ using the pre-obtained ${\hbh}_{[i,:]}^{(k-1)}$, which incurs a time complexity of $O(ND_kF_{att})$. Further, computing the results of the linear layer $\bbv_{att}^{{k-1}}$ takes $O(|E^F|F_A)$. Finally, we compute Eq.\eqref{eq: dynamic_GAT_eqnatt} for every $i\in[N]$, requiring $O(|E^F|D_k)$. Consequently, the time complexity for the $k$-th graph attention layer within MGALs is given by $O(ND_{k}F_{att}+ND_{k-1}D_k + |E^F|F_{att}+ |E^F|D_k)$.

By systematically analyzing each component, we have established the respective time complexities for ALM and the $k$-th graph attention layer within MGALs as outlined above.

\subsection{Comprehensive Analysis of CRB}
\label{app: analysis_CRB}

\subsubsection{The General Derivation of CRB}
We first provide a detailed derivation of the CR) under various NLOS conditions.
When establishing the performance bound, we assume precise knowledge of the actual measurement error distribution, denoted as $p_n(n)$. The entries corresponding to unknown positions of agents, represented by $\bbP_u:=[\bbp^x,\bbp^y] \in \mathbb{R}^{(N-N_l)\times 2}$, are reorganized into a vector form $\bbp_u^v:=[\bbp^x\|\bbp^y] \in \mathbb{R}^{2(N-N_l)}$ for ease of analysis.

According to \cite{Yin_ECM}, the Fisher information matrix (FIM) of $\bbp_u^v $ given the true $p_n(n)$ is readily obtained
\begin{equation}
\bbF\left(\bbp_u^v\right)=\left(\begin{array}{ll}
\bbF_{x x} & \bbF_{x y} \\
\bbF_{xy}^T & \bbF_{yy}
\end{array}\right)
\end{equation}
where $\bbF_{xx}, \bbF_{xy}$ and $\bbF_{yy}$ are all square matrices of dimension $(N-N_l) \times (N-N_l)$. The matrix elements are defined as:
\begin{equation}
\left[\bbF_{mn}\right]_{i, i^{\prime}}=\left\{\begin{array}{cl}
\mathcal{I}_n \cdot \sum_{\forall j \in \mathcal{N}(i)} \frac{\left(p^m_i-p^m_j\right)\left(p^n_i-p^n_j\right)}{\left\|[\bbP_u]_{[i,:]}-[\bbP_u]_{[j,:]}\right\|^2}, & i=i^{\prime} \\
-\mathcal{I}_n \cdot \delta_{i, i^{\prime}} \cdot \frac{\left(p^m_i-p^m_{i^{\prime}}\right)\left(p^n_i-p^n_{i^{\prime}}\right)}{\left\|[\bbP_u]_{[i,:]}-[\bbP_u]_{[i^\prime,:]}\right\|^2}, & i \neq i^{\prime}
\end{array},\right.
\end{equation}
for $m,n \in\{x, y\}$. Here, $\delta_{i, i^{\prime}}$ is Kronecker's delta defined by
\begin{equation}
\delta_{i, i^{\prime}}= \begin{cases}1, & \text { if } i^{\prime} \in \mathcal{N}(i) \\ 0, & \text { if } i^{\prime} \notin \mathcal{N}(i)\end{cases}.
\end{equation}
Additionally, the intrinsic accuracy scalar factor is given by:
\begin{equation}
	\mathcal{I}_{n}=\int \frac{\left[\nabla_{n} p_{n}\left(n\right)\right]^{2}}{p_{n}\left(n\right)} \mathrm{d} n.
\end{equation}
The scalar factor $\mathcal{I}_{n}$ is often approximated via Monte Carlo integration as
\begin{equation}
\label{eq: scalar_factor}
\mathcal{I}_{n} \approx \frac{1}{N_{M}} \sum_{n=1}^{N_{M}} \frac{\left[\nabla_{n} p_{n}\left(n^{\left(k\right)}\right)\right]^{2}}{p_{n}^{2}\left(n^{\left(k\right)}\right)}
\end{equation}
where $n^{(k)}, k=1,2, \ldots, N_M$ are i.i.d. samples generated from $p_n(n)$. 
Finally, the CRB is given by $\mathrm{CRB}\left(\bbp_u^v\right) := \bbF\left(\bbp_u^v\right)^{-1}$.

In the simulations, the localization accuracy is assessed in terms of the overall localization RMSE.
The lower bound of RMSE is given by
\begin{equation}
\overline{\mathrm{CRB}}\left(\bbp_u^v\right) := \sqrt{\frac{1}{N-N_l} \mathrm{tr}\left[\mathrm{CRB}\left(\bbp_u^v\right)\right]} .
\end{equation}

The subsequent focus is on deriving the PDF of noise $p_n$ and its derivative under distinct noise conditions. This will be detailed in Sections \ref{sec: Uniform_dist_NLOS} and \ref{sec: Rayleigh_dist_NLOS} for uniformly distributed NLOS and Rayleigh distributed NLOS, respectively.

\subsubsection{Uniformly Distributed NLOS Case}
\label{sec: Uniform_dist_NLOS}

For scenarios where the noise is a combination of additive Gaussian distribution (LOS) and Uniform distribution (NLOS), with an occurrence probability of $p_B$, the noise expression is given by:
\begin{equation}
\label{eq: LOS_&_NLOS}
    n =
    \begin{cases}
        n^L + n^N, & \text{w.p. } p_B, \\
        n^L, & \text{w.p. } 1 - p_B,
    \end{cases}
\end{equation}
where $n^L$ and $n^N$ are random variables with Gaussian distribution and Uniform distribution, respectively. The PDFs of $n^L$ and $n^N$  are shown as follows.

\begin{equation}
	p_{n^L} \left(x\right)= \frac{1}{\sqrt{2\pi}\sigma}e^{-\frac{x^2}{2\sigma^2}}, \quad -\infty<x<\infty,
\end{equation}

\begin{equation}
	p_{n^N} \left(x\right)=\frac{1}{b-a}, \quad a<x<b.
\end{equation}

To derive the PDF of noise $n$, we first need to derive the PDF of $n_1=n^L+n^N$. The probability distribution of the sum of two or more independent random variables is the convolution of their distributions. Thus, we have
\begin{subequations}
\begin{align}
	p_{n_1}\left(x\right)&=\int_{-\infty}^{\infty}p_{n^N} \left(u\right)p_{n^L} \left(x-u\right)du\\
	&=\frac{1}{b-a}\int_{a}^{b}\frac{1}{\sqrt{2\pi}\sigma}e^{-\frac{\left(x-u\right)^2}{2\sigma^2}}du\\
	&=\frac{1}{b-a}\frac{1}{2\sqrt{2}\sigma}\frac{2}{\sqrt{\pi}}\int_{a}^{b}e^{-\frac{\left(x-u\right)^2}{2\sigma^2}}du\qquad \left(\text{Let } t=\frac{x-u}{\sqrt{2}\sigma}, \text{then }u=x-\sqrt{2}\sigma t\right)\\
	&=\frac{1}{b-a}\frac{1}{2\sqrt{2}\sigma}\frac{2}{\sqrt{\pi}}\int_{a}^{b}e^{-t^2}d\left(x-\sqrt{2}\sigma t\right)\\
	&=\frac{1}{b-a}\frac{1}{2\sqrt{2}\sigma}\frac{2}{\sqrt{\pi}}\left(-\sqrt{2}\sigma\right)\int_{\frac{x-a}{\sqrt{2}\sigma}}^{\frac{x-b}{\sqrt{2}\sigma}}e^{-t^2}dt\\
	&=-\frac{1}{2\left(b-a\right)}\frac{2}{\sqrt{\pi}}\left(\int^{\frac{x-b}{\sqrt{2}\sigma}}_{0}e^{-t^2}dt-\int^{\frac{x-a}{\sqrt{2}\sigma}}_{0}e^{-t^2}dt\right) \qquad \left(\text{We know }erf\left(x\right)=\frac{2}{\sqrt{\pi}}\int_{0}^{x}e^{-t^2}dt\right)\\
	&=-\frac{1}{2\left(b-a\right)}\left(erf\left(\frac{x-b}{\sqrt{2}\sigma}\right)-erf\left(\frac{x-a}{\sqrt{2}\sigma}\right)\right)\\
	&=\frac{1}{2\left(b-a\right)}\left(erf\left(\frac{b-x}{\sqrt{2}\sigma}\right)-erf\left(\frac{a-x}{\sqrt{2}\sigma}\right)\right).
\end{align}
\end{subequations}
Now, since we know the PDF of $n_1$, the PDF of $n$ can be derived as:
\begin{subequations}
\begin{align}
		p_{n}\left(x\right)&=	p_{n_1}\left(x\right)p_B+	p_{n^L}\left(x\right)\left(1-p_B\right)\\
		&=\frac{p_B}{2\left(b-a\right)}\left(erf\left(\frac{b-x}{\sqrt{2}\sigma}\right)-erf\left(\frac{a-x}{\sqrt{2}\sigma}\right)\right)+\frac{1-p_B}{\sqrt{2\pi}\sigma}e^{-\frac{x^2}{2\sigma^2}}.
\end{align}
\end{subequations}
Then, the derivation of $p_{n}\left(x\right)$ is given by:
\begin{equation}
	\nabla_x p_{n}\left(x\right)=\frac{p_B}{2\left(b-a\right)}\left(-\frac{1}{\sqrt{2}\sigma}\right)\left(\frac{2}{\sqrt{\pi}}\right)\left(e^{-\left(\frac{b-x}{\sqrt{2}\sigma}\right)^2}-e^{-\left(\frac{a-x}{\sqrt{2}\sigma}\right)^2}\right)+\frac{1-p_B}{\sqrt{2\pi}\sigma}e^{-\frac{x^2}{2\sigma^2}}\left(-\frac{x}{\sigma^2}\right).
\end{equation}

\subsubsection{Rayleigh Distributed NLOS Case}
\label{sec: Rayleigh_dist_NLOS}

In the case where noise consists of additive Gaussian distribution (LOS) and Rayleigh distribution (NLOS) with occurrence probability $p_B$, the PDFs of $n^L$ and $n^N$ are shown as follows.

\begin{equation}
	p_{n^L} \left(x\right)= \frac{1}{\sqrt{2\pi}\sigma_1}e^{-\frac{x^2}{2\sigma_1^2}}, \quad -\infty<x<\infty,
\end{equation}

\begin{equation}
	p_{n^N} \left(x\right)=\frac{x}{\sigma_2^2}e^{-\frac{x^2}{2\sigma_2^2}}, \quad 0<x<\infty.
\end{equation}

The PDF of the combined noise $n_1=n^L+n^N$ is derived through convolution, yielding:
\begin{equation}
	p_{n_1} \left(x\right)=\frac{\sigma_2x}{\left(\sigma_1^2+\sigma_2^2\right)}e^{-\frac{x^2}{2\left(\sigma_1^2+\sigma_2^2\right)}}\phi\left(\frac{\sigma_2x}{\sigma_1\sqrt{\left(\sigma_1^2+\sigma_2^2\right)}}\right)+
	\frac{\sigma_1}{\sqrt{2\pi}\left(\sigma_1^2+\sigma_2^2\right)}e^{-\frac{x^2}{2\sigma_1^2}},
\end{equation}
where $\phi\left(x\right)$ is the cumulative distribution function of the standard normal random variable. Thus the PDF of $n$ is then expressed as:
\begin{subequations}
\begin{align}
	p_{v}\left(x\right)&=	p_{n_1}\left(x\right)p_B+	p_{n^L}\left(x\right)\left(1-p_B\right)\\ &=p_B\left(\frac{\sigma_2x}{\left(\sigma_1^2+\sigma_2^2\right)}e^{-\frac{x^2}{2\left(\sigma_1^2+\sigma_2^2\right)}}\phi\left(\frac{\sigma_2x}{\sigma_1\sqrt{\left(\sigma_1^2+\sigma_2^2\right)}}\right)+
	\frac{\sigma_1}{\sqrt{2\pi}\left(\sigma_1^2+\sigma_2^2\right)}e^{-\frac{x^2}{2\sigma_1^2}}\right)
	+\left(1-p_B\right)\left(\frac{1}{\sqrt{2\pi}\sigma}e^{-\frac{x^2}{2\sigma^2}}\right).
\end{align}
\end{subequations}
Then, the derivation of $p_{v}\left(x\right)$ is

\begin{equation}
\begin{aligned}
    \nabla_x p_{v}(x) = & \, p_B \left[ \frac{\sigma_2}{\sigma_1^2 + \sigma_2^2} \, e^{-\frac{x^2}{2(\sigma_1^2 + \sigma_2^2)}} \phi \left( \frac{\sigma_2 x}{\sigma_1 \sqrt{\sigma_1^2 + \sigma_2^2}} \right) + \frac{\sigma_2 x}{\sigma_1^2 + \sigma_2^2} \, e^{-\frac{x^2}{2(\sigma_1^2 + \sigma_2^2)}} \left( -\frac{x}{\sigma_1^2 + \sigma_2^2} \right) \phi \left( \frac{\sigma_2 x}{\sigma_1 \sqrt{\sigma_1^2 + \sigma_2^2}} \right) \right.\\
    & \quad + \left. \frac{\sigma_2 x}{\sigma_1^2 + \sigma_2^2} \, e^{-\frac{x^2}{2(\sigma_1^2 + \sigma_2^2)}} \frac{e^{-\left( \frac{\sigma_2 x}{\sigma_1 \sqrt{\sigma_1^2 + \sigma_2^2}} \right)^2 / 2}}{\sqrt{2\pi}} \frac{\sigma_2}{\sigma_1 \sqrt{\sigma_1^2 + \sigma_2^2}} + \frac{\sigma_1}{\sqrt{2\pi}(\sigma_1^2 + \sigma_2^2)} \, e^{-\frac{x^2}{2\sigma_1^2}} \left( -\frac{x}{\sigma_1^2} \right) \right] \\
    & + (1 - p_B) \left[ \frac{1}{\sqrt{2\pi}\sigma} e^{-\frac{x^2}{2\sigma^2}} \left( -\frac{x}{\sigma^2} \right) \right].
\end{aligned}
\end{equation}

\subsubsection{Experimental Results for Rayleigh Distributed NLOS}

In this subsection, we explore the scenario where NLOS conditions follow a Rayleigh distribution denoted as $\mathcal{R}(\sigma_2)$. 
The RMSE results of CRB and GNN-based methods (including GCN \cite{yan2021graph} and newly proposed AGNN) are shown in Tab.~\ref{tab:4}. The results indicate that, under Rayleigh-distributed NLOS conditions, AGNN consistently outperforms GNN in terms of localization accuracy. Furthermore, AGNN demonstrates proximity to the CRB across various levels of Rayleigh noise, highlighting its effectiveness and robustness in diverse noise environments.

\begin{table}[h]
	\centering
        \fontsize{10}{11}\selectfont
        \renewcommand{\arraystretch}{1.5}
	% table caption is above the table
	\caption{The averaged loss (RMSE) of CRB and GNN-based methods under different noise conditions for $N_l$=50.}
	\label{tab:4}       % Give a unique label
	% For LaTeX tables use
	\begin{tabular}{c|ccccc}
		\hline
		methods$\backslash$ Noise $\left(\sigma_1, \sigma_2,p_B\right)$ & $\left(0.1,0.5,10\%\right)$ & $\left(0.1,1,10\%\right)$ & $\left(0.25,1,30\%\right)$ & $\left(0.25,3,30\%\right)$ & $\left(0.25,5,50\%\right)$  \\
            \hline
		GCN  & 0.1085 & 0.1111 & 0.1126 & 0.1319 & 0.1599\\
            AGNN & \textbf{0.0514} & \textbf{0.0561} & \textbf{0.0764} & \textbf{0.0805} &\textbf{0.1031}\\
		\hline
            CRB  & 0.0351 & 0.0423 & 0.0671 & {0.0714} &{0.0986}\\
            \hline
	\end{tabular}
\end{table}

\subsection{The Approximated Step Function}
\label{app: appro_step_func}
As shown in Fig.~\ref{fig: appro_step_func}, the approximated step function, employing ReLU and tanh, yields a notably sharper transition on one side and effectively truncates values to zero on the other side, consequently achieving a more accurate approximation of the step function than the sigmoid function. Moreover, its derivative can be obtained everywhere (In the standard definition of ReLU, the derivative at 0 is typically taken to be 0.), which makes it feasible to compute gradients with respect to the trainable threshold.

\setcounter{figure}{11} 
\begin{figure}[h]
	\centering
	\includegraphics[width=0.6\linewidth]{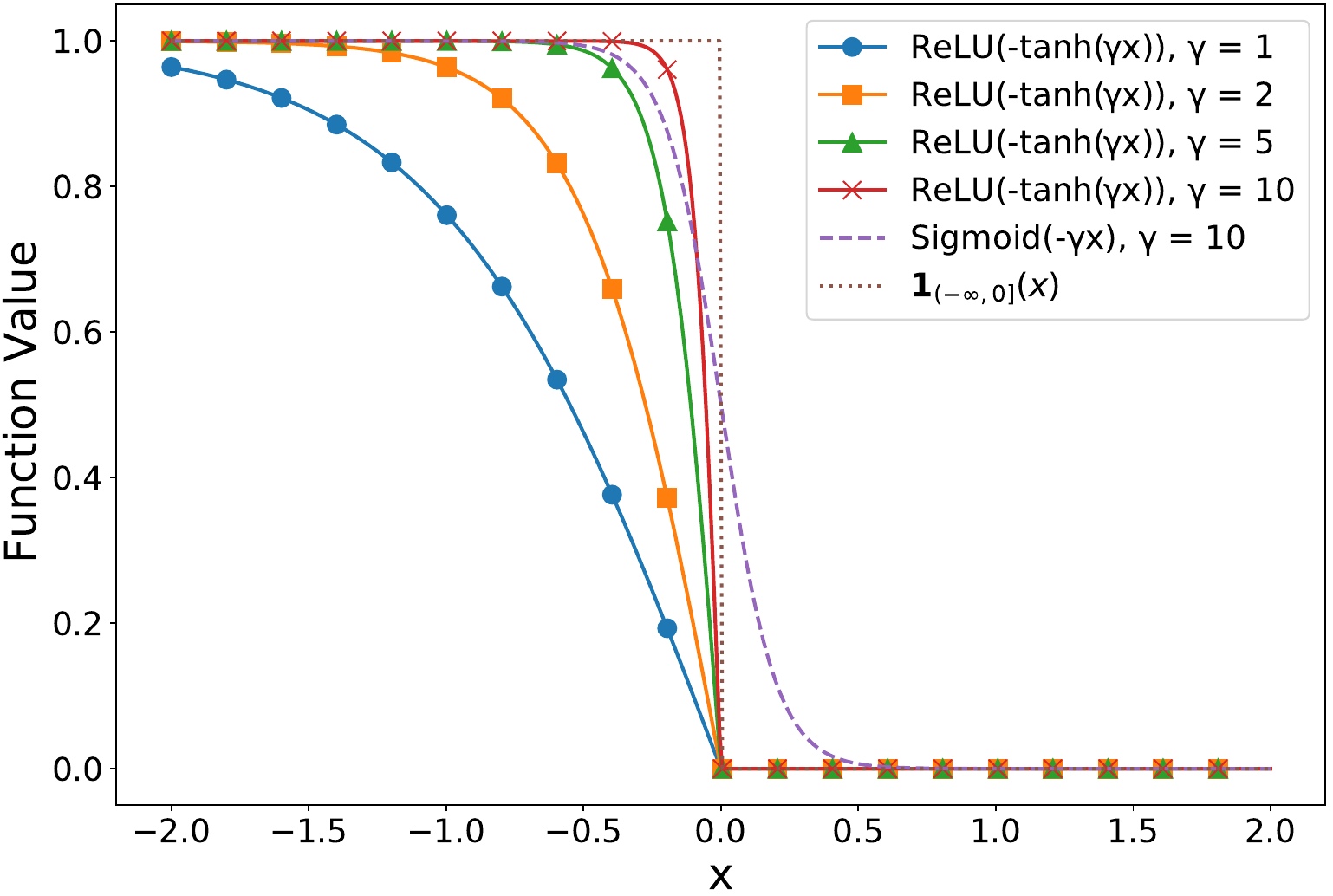}
	\caption{Comparison between sigmoid function, step function, and approximated step functions.}
	\label{fig: appro_step_func}
\end{figure}

\subsection{Additional Experimental Results}
\label{app: additional_experiments}

To further illustrate the generality of our experimental findings,  we present additional scatter plots depicting the relationship between true distances and learned thresholds for the $100$-th and the $200$-th nodes, as depicted in Fig.~\ref{fig: ALMatt_VS_MearDist_100th} and Fig.~\ref{fig: ALMatt_VS_MearDist_200th}, respectively.
Furthermore, scatter plots (Fig.~\ref{fig: adj_AdjAGNNII_Noise_100th} and Fig.~\ref{fig: adj_AdjAGNNII_Noise_200th}) are provided to demonstrate the coarse-grained neighbors ($\ccalN_{100}^C$ and $\ccalN_{200}^C$), fine-grained neighbors ($\ccalN_{100}^F$ and $\ccalN_{200}^F$), and the noise ($\bbn_{100}$ and $\bbn_{200}$) associated with the $100$-th and the $200$-th nodes.

\begin{figure}[h] 
	\centering
	\includegraphics[width=0.6\linewidth]{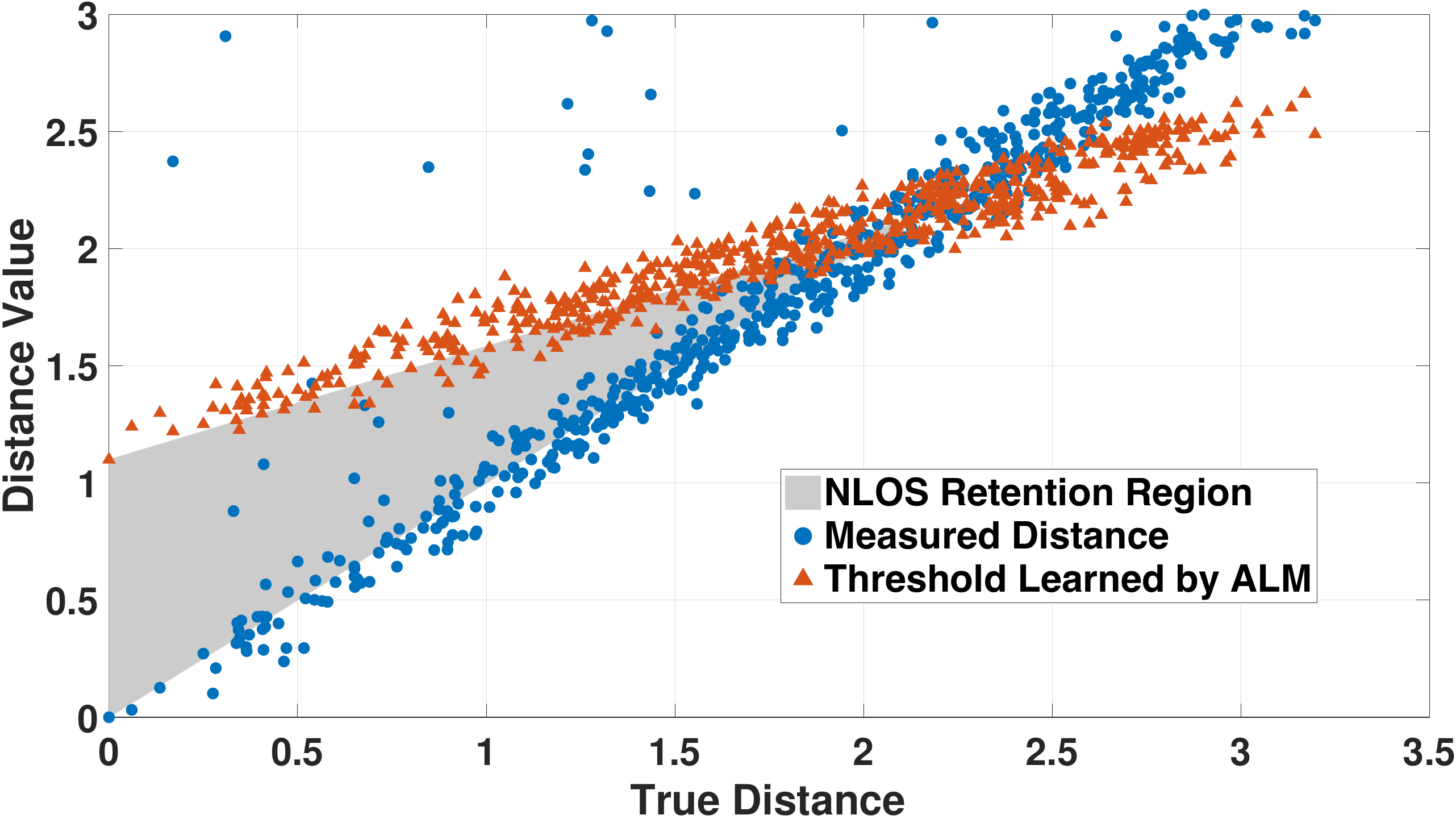}
	\caption{The threshold learned by ALM versus the measured distance for the $100$-th node in $(0.25,30\%)$ dataset with $T_h^0=3.0$.}
	\label{fig: ALMatt_VS_MearDist_100th}
\end{figure}

\begin{figure}[h] 
	\centering
	\includegraphics[width=0.6\linewidth]{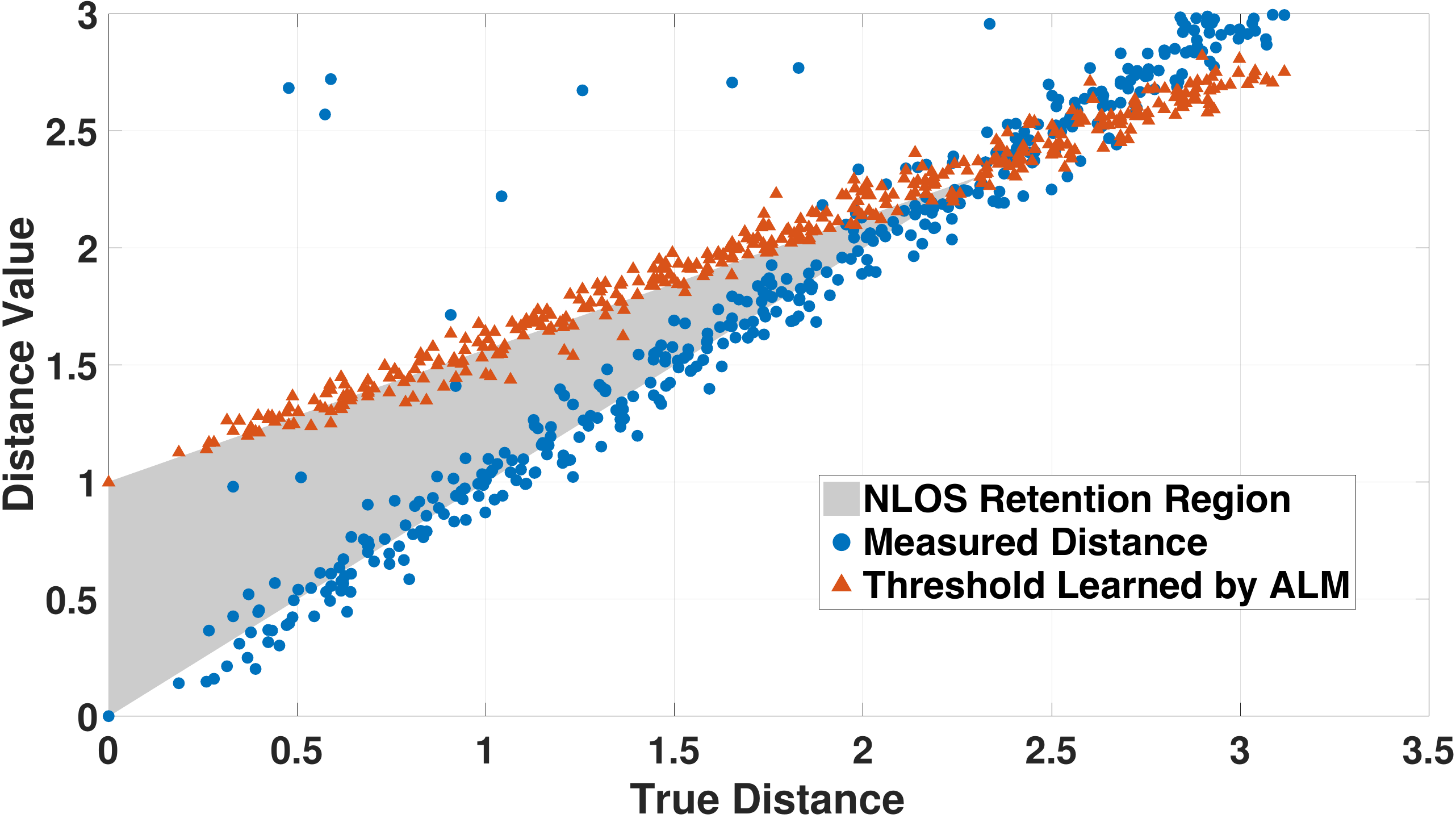}
	\caption{The threshold learned by ALM versus the measured distance for the $200$-th node in $(0.25,30\%)$ dataset with $T_h^0=3.0$.}
	\label{fig: ALMatt_VS_MearDist_200th}
\end{figure}

\begin{figure*}[h] 
	\centering
	\includegraphics[width=0.95\linewidth]{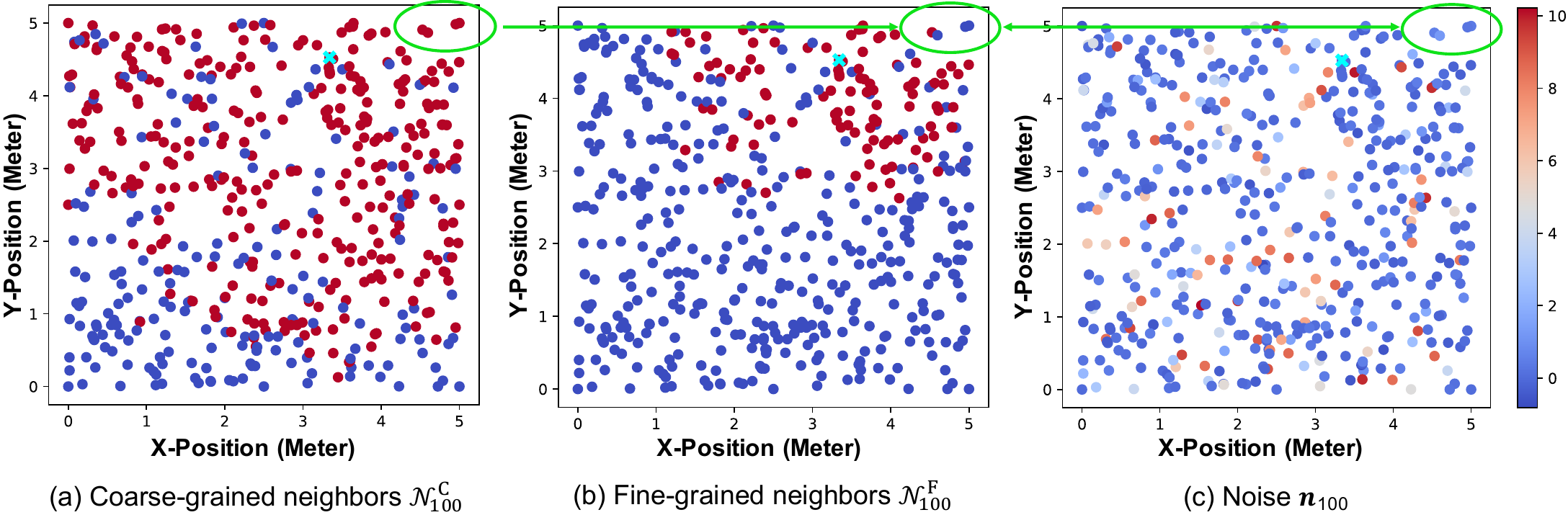}
	\caption{The different types of scatter plots for the $100$-th node, denoted by a cyan cross, in $(0.25,30\%)$ dataset with $T_h^0=4.0$.}
	\label{fig: adj_AdjAGNNII_Noise_100th}
\end{figure*}

\begin{figure*}[h] 
	\centering
	\includegraphics[width=0.95\linewidth]{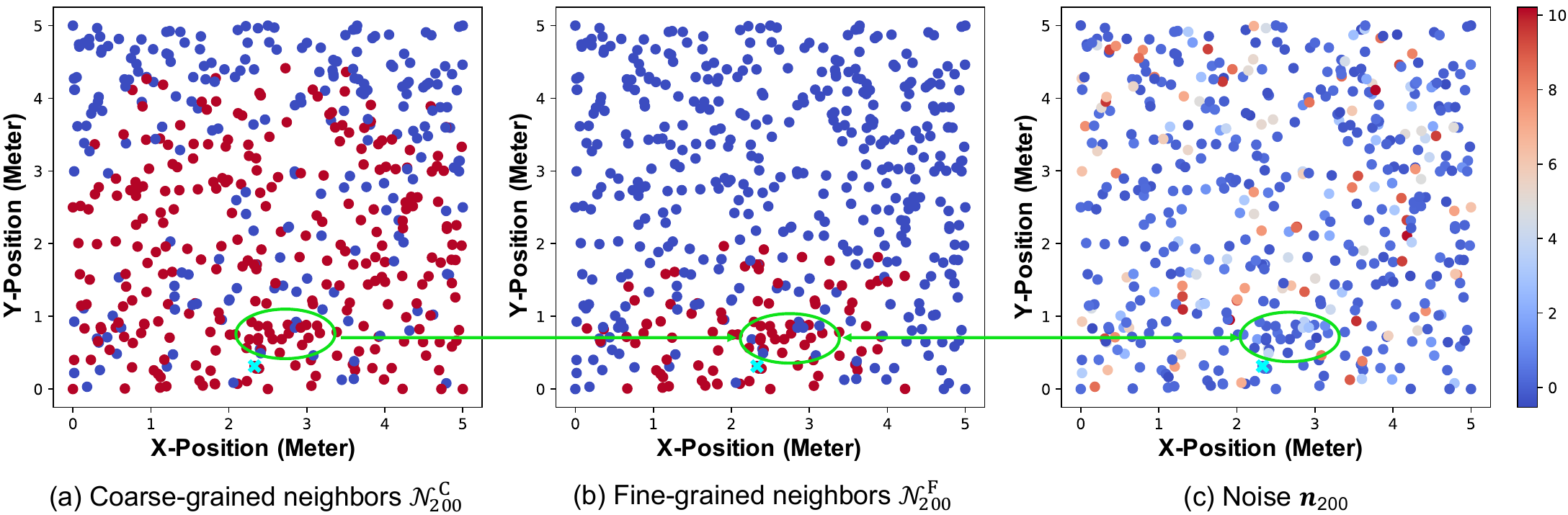}
	\caption{The different types of scatter plots for the $200$-th node, denoted by a cyan cross, in $(0.25,30\%)$ dataset with $T_h^0=4.0$.}
	\label{fig: adj_AdjAGNNII_Noise_200th}
\end{figure*}

\end{document}